\documentclass{article}



 \usepackage[preprint, final]{neurips_2025}






\usepackage[utf8]{inputenc} 
\usepackage[T1]{fontenc}    
\usepackage{hyperref}       
\usepackage{url}            
\usepackage{booktabs}       
\usepackage{amsfonts}       
\usepackage{amssymb}
\usepackage{pifont}
\usepackage{nicefrac}       
\usepackage{microtype}      
\usepackage{xcolor}         
\usepackage{physics}
\usepackage{color}
\usepackage{breqn}
\usepackage{multirow}
\usepackage{footnote}
\makesavenoteenv{table}
\makesavenoteenv{tabular}
\newcommand{\cmark}{\ding{51}}%
\newcommand{\xmark}{\ding{55}}%

\newcommand{\first}[1]{\textbf{#1}\textsuperscript{1}}
\newcommand{\second}[1]{\underline{#1}\textsuperscript{2}}
\newcommand{\third}[1]{#1\textsuperscript{3}}

\usepackage{graphicx}

\usepackage{subcaption}

\usepackage{blindtext}
\usepackage{amsmath}
\usepackage{amsthm}
\usepackage{thm-restate}
\usepackage{multicol}
\usepackage{listings}

\theoremstyle{plain}
\newtheorem{definition}{Definition}
\newtheorem{lemma}{Lemma}
\newtheorem{theorem}{Theorem}

\newtheorem{fact}{Fact}
\newtheorem{mynote}{Note}

\usepackage{cleveref}
\crefname{section}{\S}{\S\S}
\crefname{table}{Tab.}{}
\crefname{figure}{Fig.}{}
\crefname{algorithm}{Alg.}{}
\crefname{equation}{Eq.}{}
\crefname{appendix}{Appendix}{}
\crefname{theorem}{Thm.}{}
\crefname{lemma}{Lem.}{}
\crefname{fact}{Fact}{}
\crefname{mynote}{Note}{}
\crefformat{section}{\S#2#1#3} 
\crefname{proposition}{Prop.}{}
\crefname{definition}{Def.}{}
\crefname{conjecture}{Con.}{}
\crefname{corollary}{Cor.}{}

\newcommand{\R}{\mathbb{R}}
\newcommand{\C}{\mathbb{C}}
\newcommand{\Z}{\mathbb{Z}}
\newcommand{\mix}{\operatorname{Mix}}
\newcommand{\anynorm}{\operatorname{Norm}}
\newcommand{\rmsnorm}{\operatorname{RMSNorm}}
\newcommand{\dt}{\Delta}
\newcommand{\silu}{\operatorname{silu}}
\newcommand{\softplus}{\operatorname{softplus}}

\newcommand{\pcasc}{\circ}
\makeatletter
\DeclareRobustCommand{\pdot}{\mathbin{\mathpalette\pdot@\relax}}
\newcommand{\pdot@}[2]{%
  \ooalign{%
    $\m@th#1\circ$\cr
    \hidewidth$\m@th#1\cdot$\hidewidth\cr
  }%
}
\makeatother

\definecolor{codegreen}{rgb}{0,0.6,0}
\definecolor{codegray}{rgb}{0.5,0.5,0.5}
\definecolor{codepurple}{rgb}{0.58,0,0.82}
\definecolor{backcolour}{rgb}{0.95,0.95,0.92}

\lstdefinestyle{mystyle}{
    backgroundcolor=\color{backcolour},   
    commentstyle=\color{codegreen},
    keywordstyle=\color{magenta},
    numberstyle=\tiny\color{codegray},
    stringstyle=\color{codepurple},
    basicstyle=\ttfamily\footnotesize,
    breakatwhitespace=false,         
    breaklines=true,                 
    captionpos=b,                    
    keepspaces=true,                 
    numbers=left,                    
    numbersep=5pt,                  
    showspaces=false,                
    showstringspaces=false,
    showtabs=false,                  
    tabsize=2
}

\lstset{style=mystyle}

\title{Bridging Expressivity and Scalability \\ with Adaptive Unitary SSMs}

%

\author{%
    Arjun Karuvally\\
  Salk Institute for Biological Studies\\
  \texttt{akaruvally@salk.edu} \\
  \And
  Franz Nowak \\
  ETH Zürich \\
  \texttt{franz.nowak@inf.ethz.ch} \\
  \And
  T. Anderson Keller\\
  The Kempner Institute for the Study of Natural\\ and Artificial Intelligence at Harvard University\\
\texttt{t.anderson.keller@gmail.com} \\
    \And
    Carmen Amo Alonso \\
    Computer Science Department \\
    Stanford University \\
    \texttt{camoalon@stanford.edu} \\
    \And 
    Terrence J. Sejnowski \\
    Salk Institute for Biological Studies \\
    \texttt{terry@salk.edu} \\
    \And
    Hava T. Siegelmann \\
    University of Massachusetts Amherst \\
    \texttt{hava@umass.edu}
}

\begin{document}

\maketitle

\begin{abstract}
Recent work has revealed that state space models (SSMs), while efficient for long-sequence processing, are fundamentally limited in their ability to represent formal languages—particularly due to time-invariant and real-valued recurrence structures. In this work, we draw inspiration from adaptive and structured dynamics observed in biological neural systems and introduce the Adaptive Unitary State Space Model (AUSSM): a novel class of SSMs that leverages skew-symmetric, input-dependent recurrence to achieve unitary evolution and high expressive power. Using algebraic automata theory, we prove that AUSSM can perform modulo counting and simulate solvable group automata at precision logarithmically bounded in the input length, enabling SSMs to model a broad class of regular languages out of reach for other SSM architectures. To overcome the practical inefficiencies of adaptive recurrence, we develop a separable convolution formulation and a CUDA implementation that enables scalable parallel training. Empirically, we show that AUSSM and its hybrid variant—interleaved with Mamba—outperform prior SSMs on formal algorithmic tasks such as parity and modular arithmetic, and achieve competent performance on real-world long time-series classification benchmarks. Our results demonstrate that adaptive unitary recurrence provides a powerful and efficient inductive bias for both symbolic and continuous sequence modeling. The code is available at \href{https://github.com/arjunkaruvally/AUSSM}{https://github.com/arjunkaruvally/AUSSM}

\end{abstract}

\section{Introduction}

Modeling long-range dependencies efficiently and expressively remains a central challenge in sequence modeling. While Transformer architectures have achieved state-of-the-art results across domains such as language modeling \cite{Dubey2024TheL3, Riviere2024Gemma2I,openai2024gpt4technicalreport}, forecasting \cite{Piao2024FredformerFD,zhou2021informer}, and protein design \cite{ferruz_protgpt2_2022}, their quadratic complexity with respect to sequence length limits scalability \cite{keles2023computational}. In response, recent work has explored \textit{state space models} \cite{gu2021combining,gu2022efficiently} (SSMs) as a scalable alternative, using linear-time convolutions and structured recurrence to enable efficient processing of long sequences \cite{gu2022efficiently, Gu2023MambaLS}. Despite the computational advantages SSMs offer, they are fundamentally limited in their ability to express general linear time-varying systems and formal languages efficiently. Even basic regular languages that require counting, such as parity or balanced parentheses \cite{sarrof2024expressivecapacitystatespace} are impossible for practically used SSM architectures like Mamba that have positive real eigenvalue spectra. Frontier SSMs are either Linear Time Invariant (LTI) or partially Linear Time Varying (LTV), resulting in weaker expressivity compared to more general LTV systems that are capable of approximating any non-linear dynamical systems \cite{zhang1969linear}. Two properties thus emerge as necessary for increasing the expressivity of SSMs: a general eigenvalue spectrum and adaptive recurrence. However, incorporating both these properties naively in SSMs introduces gradient instability through the exploding/vanishing gradient problem \cite{hochreiter2001gradient}, and leads to quadratic computational complexity, severely limiting scalability.

In this work, we propose the \textit{Adaptive Unitary State Space Model} (AUSSM) as a principled middle ground between scalability and expressivity. AUSSM is a \textbf{fully adaptive state space model} with linear time-varying recurrence and a unitary eigenvalue spectrum, combining the theoretical benefits of time-varying recurrence with the practical advantages of structured, conserved dynamics. We formally prove that AUSSM can perform modulo counting with constant-width hidden states, and that combining AUSSM with existing non-adaptive models like Mamba yields \textit{maximal expressivity within the class of diagonal SSMs}. To make this architecture scalable, \textit{we introduce a novel separable kernel formulation for adaptive SSMs} that exposes efficient computational algorithms which reduce the quadratic cost of adaptive recurrence to linear time and space. Empirically, we validate the theoretical claims through a suite of algorithmic tasks, demonstrating substantial performance gains over Mamba, and showing that AUSSM retains competitive efficiency through an optimized CUDA implementation. Further, we evaluate the long-range modeling capabilities by testing on a suite of time series benchmarks.

Interestingly, structured unitary and adaptive dynamics are also found as emergent behavior in biological neural systems \cite{davis2021spontaneous} and even trained non-linear recurrent neural networks \cite{pmlr-v235-karuvally24a}, where they are believed to support flexible integration of information over space and time \cite{muller2018cortical}. We take the computational role of these structured unitary dynamics as a motivation to derive AUSSM using a skew-symmetric ODE used in identifying purely rotational features from data in neuroscience \cite{Churchland2012NeuralPD}.

AUSSM provides a new architectural foundation that bridges formal expressivity and practical scalability (Figure \ref{fig:intro}). It expands the space of scalable SSMs by showing that adaptive (and time-varying) recurrences can be made computationally efficient, unlocking new capabilities for symbolic and long-context sequence modeling that is grounded in biological principles and theory.

\begin{figure*}[t!]
    \centering
    \begin{subfigure}{0.35\textwidth}
        \centering
        \includegraphics[width=\textwidth]{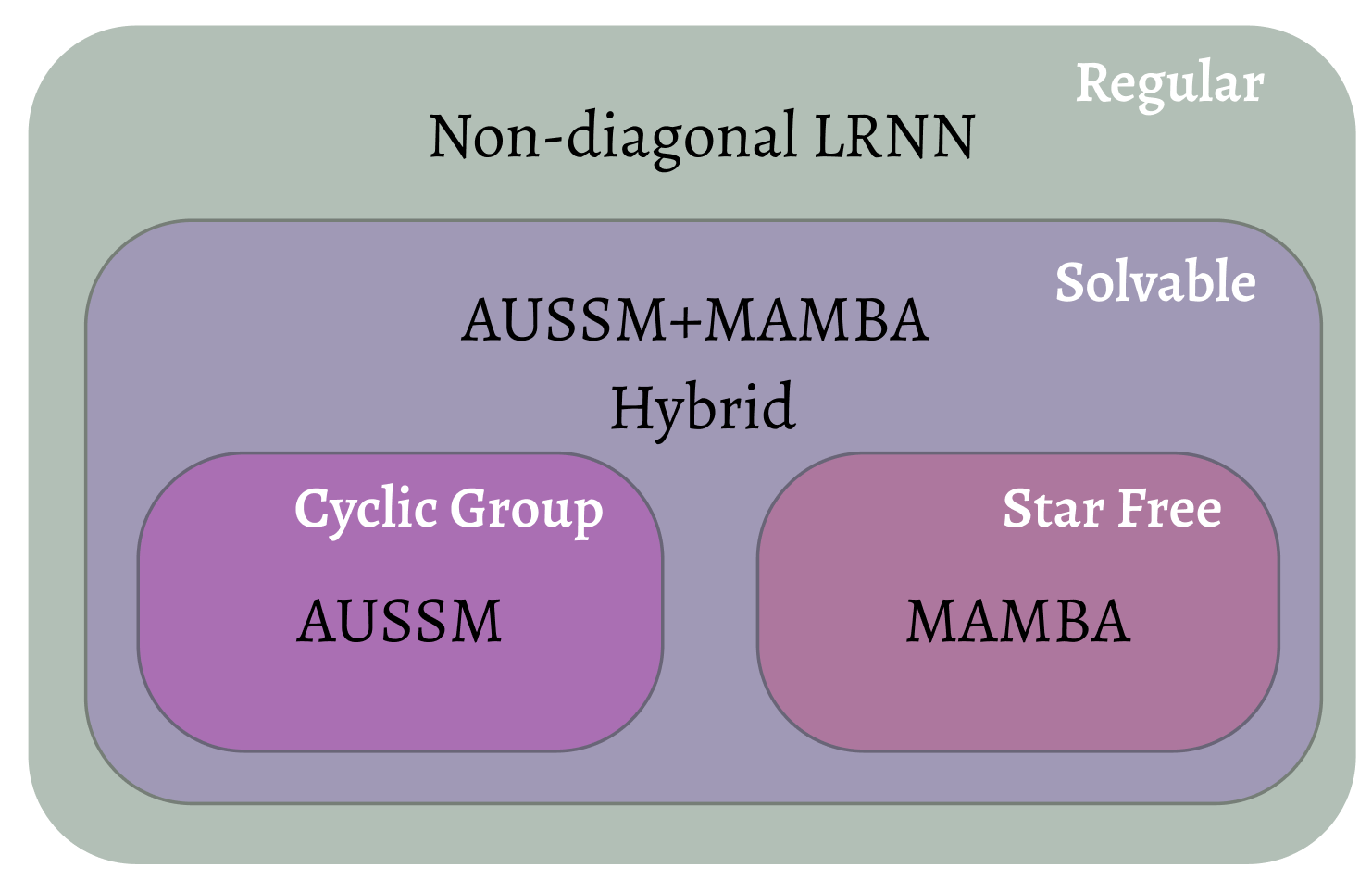}
        \caption{Formal language classes recognizable by different architectures.}
    \end{subfigure}
    \hfill
    \begin{subfigure}{0.64\textwidth}
        \centering
        \includegraphics[width=\textwidth]{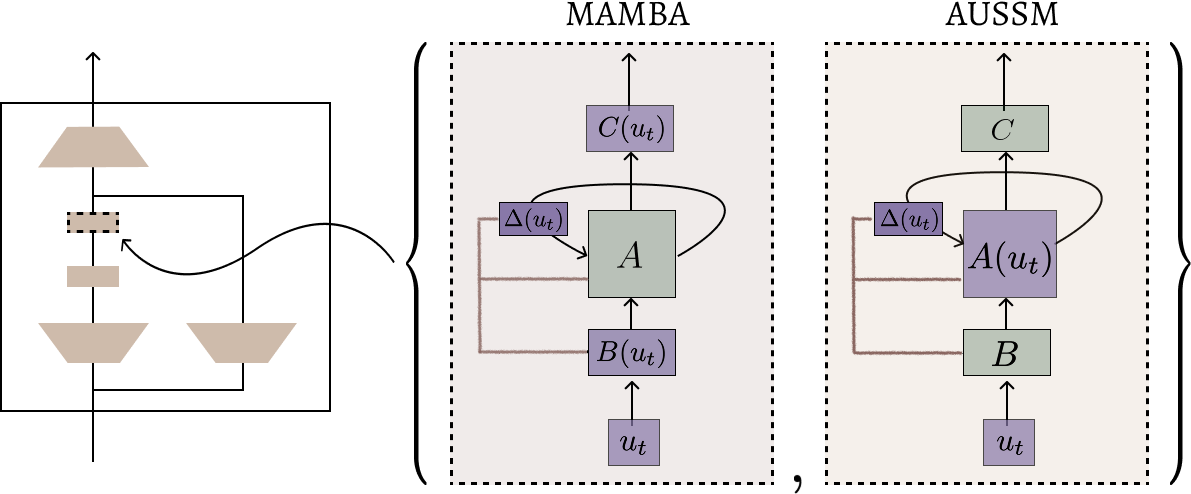}
        \caption{SSM block structure of Mamba and AUSSM (input-dependent components are shaded in blue).}
    \end{subfigure}
    \caption{\textbf{(a)} Existing practical SSM blocks like Mamba use fast parallel algorithms for computing the output, resulting in a tradeoff with expressivity. Non-diagonalizable Linear RNNs are the most expressive (in formal language terms) but lack scalable computational algorithms and suffer from gradient issues. AUSSM balances the expressivity-scalability tradeoff using a fully adaptive diagonal unitary recurrence. Fast SSMs with improved expressivity can be built by combining AUSSM with MAMBA blocks. \textbf{(b)} The AUSSM block uses the same block structure as Mamba \cite{Gu2023MambaLS}, where the S6 SSM in Mamba is replaced with AUSSM. The main difference between AUSSM and S6 is the adaptive recurrence, where in the case of S6, $B$, $C$, and $\Delta$ are adaptive, whereas in AUSSM, $\Delta$ and $A$ are adaptive (see Section \ref{section:AUSSM} for details). AUSSM blocks can be used as drop-in replacements for existing SSM backbones to provide higher expressivity (see Section \ref{section:theoryExpressivity} for theoretical and Section \ref{sec:experiments} for experimental validation). }
    \label{fig:intro}
\end{figure*}

\section{Background and Motivation}

State Space Models (SSMs) have emerged as efficient alternatives to Transformers for sequence modeling, particularly in long-context settings. SSMs compress arbitrarily long sequences into a fixed-dimensional hidden state using a recurrent formulation and this can be computed in parallel using an efficient convolution formulation.

The most general SSMs are described by a continuous-time Ordinary Differential Equation (ODE):
\begin{equation}
\frac{dx(t)}{dt} = A_t x(t) + B_t u(t), \quad y(t) = C_t x(t)
\label{eqn:eqn:generalSSM}
\end{equation}
or its discrete counterpart:
\begin{equation}
x(t) = A'_t x(t-1) + B'_t u(t), \quad y(t) = C'_t x(t)
\end{equation}
where \( x(t) \in \mathbb{R}^n \) is the hidden state, \( u(t) \) is the input, and \( y(t) \) is the output. The matrices \( A_t, B_t, C_t \) define the system dynamics and may vary over time. In the discrete system, these matrices have an equivalent discretized counterpart in $A', B', C'$, respectively. The discrete recurrence can be reformulated as a parallel convolution:
\begin{equation}
y(t) = \sum_{k \leq t} C'_t \left( A'_{t-1} \cdots A'_{k+1} \right) B'_k u(k)
\label{eqn:generalConvolutionForm}
\end{equation}
However, this convolution kernel requires \(\mathcal{O}(L^2)\) memory for sequence length \(L\), as each kernel entry must be stored for $t$ and $k$ that index over the sequence length. To avoid this, most practical SSMs assume time-invariant dynamics (\(A_k = A, B_k = B, C_k = C\)), allowing for a compressed storage of the kernel but significantly restricting expressivity.
Recent SSMs like Mamba \cite{Gu2023MambaLS} introduce \textit{partial adaptivity}, where \(B, C\), and step size \(\Delta\)) are adaptive while keeping \(A\) fixed or diagonal. However, such models cannot simulate general Linear Time-Varying (LTV) systems or perform counting-based tasks (e.g., parity, modular arithmetic) with constant-width hidden states (see Appendix \ref{appendix:limitationsOfSSMs}). These limitations prevent Mamba from modeling input-sensitive dynamics or general multiscale time-varying behavior. There are other approaches that try to improve the expressivity of SSMs by generalizing real-valued recurrences. Notably, Linear Recurrent Units \cite{orvieto2023resurrecting} generalize the real-valued eigenvalue spectra with initialization close to the unit circle on the imaginary plane. This formulation has been shown to be capable of universal approximation when interleaved with non-linear multi-layer perceptrons \cite{orvieto2023universality}. However, this approximation relies on perfectly storing the dynamical system history without regard to resource constraints. General LTV systems are much more flexible as they have the capability to gate information based on input, thereby retaining only selected information (compressed information) that is necessary for processing, rather than a lossless history. Another notable work is linear Oscillatory State Spaces (linOSS) \cite{Rusch2024OscillatorySM}, where simple harmonic ODEs are discretized to derive novel oscillatory SSMs with conservation properties identical to AUSSM. The linOSS models are more expressive than SSMs with purely real eigenvalues, but fall short of an LTV system. AUSSM balances the two - the improved expressivity of diagonal LTV systems and the scalability of separable convolutions.

\section{Adaptive Unitary State Space Model (AUSSM)} \label{section:AUSSM}

We tackle the problem of balancing expressivity with scalability in Adaptive State Space Models by introducing two features. Adaptive input-dependent recurrent matrix improving expressivity, and unitary dynamics addressing training scalability. In this section, we derive AUSSM from the skew-symmetric ODE used to identify rotational features in the brain \cite{Churchland2012NeuralPD}, then we prove that the inputs control AUSSM rotational frequencies smoothly, enabling a stable and effective adaptive SSM. Next, we prove that the AUSSM, combined with regular Mamba layers, is maximally expressive in the class of diagonal SSMs in terms of formal language recognition.

To derive the AUSSM model with purely rotational properties, we use the skew-symmetric Ordinary Differential Equation (ODE) used in the rotational Principal Component Analysis (jPCA) procedure - a variant of Principal Component Analysis (PCA) used in neuroscience \cite{Churchland2012NeuralPD}. jPCA is used to identify rotational features of a dynamical system using observations from it. Since our requirement is to process an input signal $u(t)$ into the hidden state, we use a version of the jPCA ODE with control input given by Equation \ref{eqn:eqn:generalSSM}, with the additional constraint that the input matrix $A_t$ is skew-symmetric (with purely imaginary eigenvalues) and $B_t$ and $C_t$ stay constant with time, i.e., $B_t = B$ and $C_t = C$. 
We discretize the ODE following the procedure used in Mamba \cite{Gu2023MambaLS} with a step size $\Delta_t \in \mathbb{R}$ to obtain a discrete dynamical system (See Appendix \ref{appendix:AUSSMDerivation} for details),
\begin{equation}
\label{eqn:AUSSMODEDiscretized}
\begin{cases}
    x(t) = \exp(\Delta_t A_t) \, \, x(t-1) \, + \Delta_t B \, u(t) \, , \\
    y(t) = C \, x(t) \, . 
\end{cases}
\end{equation}
Note here that $A_t$ changes with time from adaptivity, and it is a skew-symmetric matrix. We assume that $A_t, \forall t$ belongs to a class of simultaneously diagonalizable matrices. Therefore $\exp(\Delta_t A_t)$ can be diagonalized to obtain $\exp(\Delta_t i \Lambda_j(t))$ where $\Lambda_j(t) \in \mathbb{R}$ and each $i \Lambda_j(t)$ is the $j^{\text{th}}$ eigenvalue of the matrix $A_t$. This implies that the final discrete dynamical system has purely unitary eigenvalues, i.e., eigenvalues exactly on the unit circle. The AUSSM ODE is a marginally stable, time-varying linear system where the input both drives and dynamically reshapes the system. The skew-symmetric nature of $A_t$ guarantees marginal stability by ensuring that all eigenvalues lie on the imaginary axis in continuous time, or on the unit circle after discretization (see \cref{lemm:marginal_stability} in Appendix \ref{sec:eigenvalue_analysis}). This structure enables long-term memory retention without gradient explosion or decay (see \cref{lemm:gradients} in Appendix \ref{sec:eigenvalue_analysis})\footnote{In practice, there will always be small deviations from the ideal theoretical behavior due to the limited precision of modern computers.}.\looseness=-1

The adaptivity of $A_t$ is enforced by making $A_t$ a function of input with $A_t = f(u(t))$ where $f: \mathbb{R} \to \mathbb{R}^n$ is the function defining how the input influences the recurrent matrix. With adaptivity, the input acts as a control signal, shaping the rotational dynamics based on the instantaneous input, analogous to gain scheduling or bilinear control systems \cite{boufrioua2022gain,zancato2024b}. This design allows the system to dynamically traverse a spectrum of rotational behaviors in the state space, facilitating expressive temporal modeling driven by the input signal.

\begin{theorem}[Input-Modulated Rotation Frequencies via Skew-Symmetric Generator]
Let \( A: \mathbb{R} \to \mathbb{R}^{n \times n} \) be a smooth function such that \( A(u) \) is skew-symmetric for all \( u \in \mathbb{R} \). Then for each \( u \in \mathbb{R} \), all eigenvalues of \( A(u) \) lie on the imaginary axis, and the eigenvalues of the discrete-time transition matrix \( \Phi(u) = \exp(\Delta A(u)) \) lie on the complex unit circle. Furthermore, the eigenvalues of \( A(u) \) depend continuously on \( u \), and thus the angular frequency of state-space rotation is smoothly and directly modulated by the input. See proof in Appendix \ref{sec:eigenvalue_analysis}.
\end{theorem}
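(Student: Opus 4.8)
The plan is to establish the three assertions in turn, each reducing to a classical fact about (skew-)symmetric matrices; no deep machinery is needed, so the proof will be short and the only subtlety is the precise sense of "smoothly."

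First I would prove the imaginary-axis claim. Fix $u \in \R$, let $\lambda \in \C$ be an eigenvalue of $A(u)$ with unit eigenvector $v \in \C^n$, and compute $\lambda = v^{*} A(u)\, v$. Taking conjugate transpose and using skew-symmetry $A(u)^{\top} = -A(u)$ (and realness, so $A(u)^{*} = A(u)^{\top}$) gives $\bar\lambda = v^{*} A(u)^{\top} v = -\,v^{*} A(u)\, v = -\lambda$, hence $\operatorname{Re}\lambda = 0$ and $\lambda = i\theta$ with $\theta \in \R$. Equivalently, $iA(u)$ is Hermitian and therefore has real spectrum; I may prefer to phrase it this way, since the Hermitian reformulation is also convenient for the continuity step.

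Second, for the unit-circle claim, I would observe that a real skew-symmetric matrix is normal, hence by the spectral mapping theorem the spectrum of $\Phi(u) = \exp(\Delta A(u))$ is exactly $\{\, e^{\Delta \lambda} : \lambda \in \operatorname{spec}(A(u)) \,\}$. Writing $\lambda = i\theta$ and using $\Delta \in \R$ yields $|e^{i\Delta\theta}| = 1$, so every eigenvalue of $\Phi(u)$ lies on the complex unit circle; I would additionally note that $\Phi(u)^{\top}\Phi(u) = \exp(\Delta A(u)^{\top})\exp(\Delta A(u)) = \exp(-\Delta A(u))\exp(\Delta A(u)) = I$, so $\Phi(u)$ is orthogonal, which is the cleanest statement of unitary (norm-preserving) evolution. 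For the geometric reading, the real normal form of $A(u)$ decomposes $\R^n$ into invariant $2$-planes on which $A(u)$ acts as $\theta_j(u)$ times a rotation generator (plus a possible one-dimensional kernel), so $\Phi(u)$ acts on the $j$-th plane as a planar rotation by angle $\Delta\,\theta_j(u)$.

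Finally, for the continuity/smoothness statement, I would argue that the coefficients of the characteristic polynomial $\det(\lambda I - A(u))$ are polynomials in the entries of $A(u)$, hence smooth in $u$, and then invoke continuity of the roots of a polynomial as a function of its coefficients to conclude that $\operatorname{spec}(A(u))$, viewed as a multiset, depends continuously on $u$; at any $u_0$ where an eigenvalue $i\theta_j(u_0)$ is simple, the implicit function theorem applied to the characteristic equation gives a smooth local branch $u \mapsto \theta_j(u)$, so the per-plane rotation angle $\Delta\,\theta_j(u)$ is smoothly modulated by the input there. The main obstacle is purely one of bookkeeping: globally, across eigenvalue crossings, the labeled eigenvalue functions need not be $C^\infty$ (they remain continuous, in fact Lipschitz), so I would state the conclusion at the level of continuity of the spectrum together with local smoothness at simple eigenvalues, and remark that under the stronger hypothesis that $u \mapsto A(u)$ is real-analytic, Rellich's theorem applied to the Hermitian family $u \mapsto iA(u)$ lets one choose the $\theta_j(u)$ real-analytically on all of $\R$ — which is the sharp form of "the angular frequency is smoothly and directly modulated by the input."
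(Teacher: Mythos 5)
Your proof follows essentially the same route as the paper's (Lemma~\ref{lemm:rotation} in Appendix~\ref{sec:eigenvalue_analysis}): purely imaginary spectrum from skew-symmetry, orthogonality of \(\exp(\Delta A(u))\) to place the eigenvalues of \(\Phi(u)\) on the unit circle, and continuity of the spectrum in \(u\). You are somewhat more rigorous at two points. First, you actually prove the imaginary-axis claim via the Rayleigh-quotient computation \(\bar\lambda = -\lambda\) rather than citing it as well known. Second, and more substantively, your treatment of the continuity/smoothness claim is sharper than the paper's: the paper asserts continuity ``provided that \(A(u)\) has distinct eigenvalues or that perturbations are small,'' which leaves the behavior at eigenvalue crossings vague, whereas you correctly separate (i) continuity of the spectrum as a multiset (from continuity of the characteristic polynomial's roots in its coefficients), (ii) local smoothness of each branch \(\theta_j(u)\) at simple eigenvalues via the implicit function theorem, and (iii) global analytic selection via Rellich's theorem for the Hermitian family \(iA(u)\) under a real-analyticity hypothesis. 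That last distinction matters because the theorem's informal phrase ``smoothly modulated'' is, strictly speaking, only guaranteed in sense (ii) away from crossings; your formulation is the defensible one.
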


Hence, by designing \( A(u) \) appropriately (e.g., via a learnable function \( f(u) \)), the AUSSM can modulate the rotational speed and mode structure of the hidden state space based on the input signal in a smooth and controlled manner. Further details on how the above SSM is practically implemented are in Section \ref{section:implementation}, where we diagonalize the above ODE and introduce input adaptivity. The inputs also have a dimension of $d$, in which case the proposed SSM is applied to each input dimension following the approach used in Mamba. 

\subsection{Formal Expressivity} \label{section:theoryExpressivity} 
\textit{Given our construction of the AUSSM, how expressive is it for formal languages?} 

The goal of a formal expressivity theory is to determine, for a given architecture class, which functions or formal languages can be represented by some finite instantiation of that architecture. The quantification is over architectures —i.e., over possible finite hyperparameter settings such as model dimension, input dimension, or transition rank—rather than over the parameters within a fixed instantiation. Formal expressivity analysis goes beyond our earlier discussion of limitations of SSMs in expressing LTV dynamical systems (further detailed in Appendix \ref{appendix:limitationsOfSSMs}).

Representing simple formal languages has been found to be a major weakness of SSMs. Recently, a flurry of research has utilized algebraic automata theory, specifically Krohn-Rhodes theory \cite{krohn-rhodes-1965-algebraic}, to analyze the types of formal languages expressible by different LLM architectures, notably transformers \cite{liu2023transformers} and SSMs \cite{sarrof2024expressivecapacitystatespace, grazzi2025unlocking}. The Krohn–Rhodes decomposition theorem states that any finite-state machine can be simulated by a cascade of simpler automata drawn from two types: permutation automata, which model reversible group-like behavior, and reset automata, which model state-resetting dynamics (the next state depends only on the input, not on the current state). This result implies that complex regular languages can be recognized by composing SSMs that simulate these simple automata.
There is a subset of finite-state automata whose decomposition contains only set-reset automata and \emph{cyclic} permutation automata, which suffices to recognize a large subset of regular languages, the so-called solvable languages.

Most SSMs used in practice have diagonal or diagonalizable transition matrices $A$ that can only have positive eigenvalues. 
As \cite{sarrof2024expressivecapacitystatespace} showed, this means they cannot perform modulo counting, restricting their expressivity to the subset of star-free regular languages.\footnote{Star-free languages are those languages that can be defined without the use of a Kleene star, only using concatenation, union, and complement.}
\cite{sarrof2024expressivecapacitystatespace} also outlines the necessary conditions for SSMs to overcome this limitation and represent a larger class of regular languages. This requires 1. the ability to implement modulo counters, and 2. the ability to implement Krohn-Rhodes cascade products. Here, we reiterate their relevant results and show that our implementation not only satisfies these conditions but can recognize any solvable regular language, a language class out of reach for most practical SSMs. {For a unified overview situating our expressivity results within the SSM expressivity literature, see \cref{appendix:related-work}.}

\begin{fact}[\cite{sarrof2024expressivecapacitystatespace}, Thm. 2]
    Diagonal (or diagonalizable) SSMs with only positive eigenvalues cannot perform modulo counting at finite precision, which means they can only recognize star-free languages.%
    \footnote{Note that this theorem assumes finite precision; similar limitations are expected to persist under logarithmically bounded precision, though we do not attempt a full extension here.}\looseness=-1
\end{fact}

\begin{restatable}{lemma}{modcount}\label{lemm:modcount}
    For any $k\in\Z^+$, one can construct a single-layer AUSSM that counts modulo $k$, which means AUSSMs can simulate arbitrary cyclic group automata.
\end{restatable}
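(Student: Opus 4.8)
The plan is to realize the modulo-$k$ counter with a single constant-width AUSSM channel, exploiting the isometric embedding of the cyclic group $\Z/k\Z$ into the unit circle, $j\mapsto\zeta^{\,j}$ with $\zeta:=e^{2\pi i/k}$. By the diagonalized recurrence behind \eqref{eqn:AUSSMODEDiscretized}, one channel of AUSSM with a $2\times 2$ skew-symmetric generator evolves on its $\R^2$ state (viewed as $\C$) as $x(t)=\exp(\Delta A_t)\,x(t-1)+\Delta B\,u(t)$, where $\exp(\Delta A_t)$ is multiplication by $e^{i\Delta\Lambda(t)}$ and $\Lambda(t)=f(u(t))$ is a learnable function of the scalar symbol embedding. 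I would first embed each symbol $\sigma\in\Sigma$ as a distinct scalar $v_\sigma$ and choose $f$ so that $\Delta\Lambda(t)=2\pi g_{\sigma(t)}/k$, where $g_\sigma\in\Z/k\Z$ is the group element contributed by $\sigma$ (for the plain mod-$k$ counter, $g_a=1$ on the increment symbol $a$ and $g_\sigma=0$ otherwise); then the transition is exactly multiplication by $\zeta^{\,g_{\sigma(t)}}$, with width $2$ independent of $k$.

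Next I would fix the initial condition and the injection term. With $x(0)=1$ and $B=0$, a one-line induction gives $x(t)=\prod_{s\le t}\zeta^{\,g_{\sigma(s)}}=\zeta^{\,q(t)}$ where $q(t)=\big(\sum_{s\le t}g_{\sigma(s)}\big)\bmod k$; since $j\mapsto\zeta^{\,j}$ is a bijection onto the $k$-th roots of unity, the state faithfully encodes the count. If one insists on the standard convention $x(0)=0$, I would keep the rotations and switch the injection on only for the increment symbol, $\Delta B\,u(t)=1$ iff $\sigma(t)=a$; unrolling then gives, after $m$ increments, $x=\sum_{j=0}^{m-1}\zeta^{\,j}=:P(m)$, which is $k$-periodic in $m$, and the elementary fact that makes this work is that $P(0),\dots,P(k-1)$ are pairwise distinct — indeed $P(r')-P(r)=\zeta^{\,r}P(r'-r)$ with $P(s)=(\zeta^{\,s}-1)/(\zeta-1)\ne 0$ for $1\le s\le k-1$. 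In either case a fixed, $k$-independent readout (the linear map $C$ returning the two state coordinates, composed with a nearest-point decoder onto the $k$ encoded points) recovers $q(t)$, hence any accept/reject partition of $\Z/k\Z$, so the associated regular language is recognized.

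For the concluding "hence" clause, I would note that every finite cyclic group is $\Z/k\Z$ and that a cyclic permutation automaton is exactly one whose input actions commute and factor through a homomorphism $\Sigma^*\to\Z/k\Z$, $\sigma\mapsto g_\sigma$; the construction above with general $g_\sigma$ tracks the accumulated element $\sum_s g_{\sigma(s)}\bmod k$, from which the automaton's state and its acceptance are fixed functions — so a single width-$2$ AUSSM layer simulates it. (This simultaneously supplies every cyclic-prime factor $\Z/p\Z$ a Krohn--Rhodes cascade could call for, though the direct construction already suffices.)

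The one genuinely delicate point — and the reason the result is phrased at finite precision — is round-off: $\zeta$ has irrational coordinates for most $k$, so iterated rotation accumulates error. Here the unitary structure is exactly what rescues the argument: each transition has operator norm $1$, so the deviation after $L$ steps grows only additively, $O(L\varepsilon)$, rather than exponentially as in a non-unitary recurrence, and the target set $\{\zeta^{\,j}\}$ (or $\{P(r)\}$) is a fixed finite set with a constant minimum separation $\delta_k>0$. I would therefore close with a standard perturbation estimate for products of near-unitary maps, showing that precision logarithmic in $L$ keeps every iterate within $\delta_k/2$ of its ideal value, which suffices for the decoder. I expect this error-accumulation bookkeeping — not the algebraic core — to be the main obstacle to a fully rigorous writeup.
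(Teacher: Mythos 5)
Your proposal is correct and takes essentially the same approach as the paper: encode $\Z/k\Z$ as the $k$-th roots of unity, set $x_0=1$, $B=0$, and let each symbol's input-dependent $A$ act as multiplication by $\zeta^{g_\sigma}$, with the unitary structure yielding only additive round-off against a fixed minimum separation between code points (the paper's ``note on precision'' makes the same estimate). Your secondary variant with $x(0)=0$ and a nonzero injection, decoded via the pairwise-distinct partial sums $P(m)$, is an extra the paper does not include but is not needed for the result.
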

\begin{proof}[Proof sketch]
    Assume we want to count the number of 1's modulo 2 in a length-T input sequence $(u)_{t=1,\ldots, T} \in \{0,1\}^T$.
    A single-layer AUSSM with $x_0=1$, $A(1)=-1$, $A(0)=1$, and $B(0)=B(1)=0$ will have a hidden state of $x_t = -1$ for odd counts and $x_t = 1$ for even counts of 1 up to position $t$. 
    Similarly, to count modulo $4$, we can set $A(1)$ to the fourth root of unity, i.e., either $i$ or $-i$, and $A(0)=1, B(0)=B(1)=0$, as before.
    This method can be extended to other mod $k$ counters by setting $A(1)$ to the $k$th root of unity, $\exp(2 \pi i/ k)$.  An AUSSM can take on these parameters as it uses input-dependent A matrices whose eigenvalues lie on the unit circle of the complex plane.\footnote{We assume an idealized setting in which the numerical precision is logarithmic in the sequence length. For most sequence lengths seen in practice, this is a reasonable assumption (see \cref{appendix:flt} for details).}
    This technique can be extended to perform modulo $k$ addition, which allows the simulation of cyclic group automata (see \cref{appendix:flt}).
\end{proof}

\begin{restatable}{lemma}{cascade}\label{lemm:cascade}
    An SSM consisting of interleaved Mamba and AUSSM blocks (hybrid Mamba+AUSSM) can implement cascade products of automata simulated by Mamba SSMs and AUSSMs.
\end{restatable}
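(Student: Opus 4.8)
The plan is to prove the statement by induction on the depth of the cascade, using the fact that a stack of SSM blocks with residual connections transmits both the original input and the running state of each component automaton forward to the next block. First I would recall the definition of the cascade (wreath) product: given automata $M_1=(Q_1,\Sigma,\delta_1)$ and $M_2=(Q_2,Q_1\times\Sigma,\delta_2)$, their cascade $M_1\circ M_2$ has state set $Q_1\times Q_2$ and transition $\delta((q_1,q_2),\sigma)=(\delta_1(q_1,\sigma),\,\delta_2(q_2,(q_1,\sigma)))$, and iterating yields cascades of arbitrary length. I would then fix encodings: each state $q\in Q_i$ is represented by a one-hot (hence linearly separable) vector $e(q)$, and each input symbol by a fixed vector, so that ``simulation'' means the block's readout at position $t$ equals $e(q^{(t)})$, the encoding of the automaton's state after the length-$t$ prefix.

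For the base case, a single AUSSM block simulates any cyclic permutation automaton by Lemma \ref{lemm:modcount}, and a single Mamba block simulates the reset (and trivial) automata generating the star-free part, which is exactly the expressivity characterization restated above from prior work; in both cases the readout matrix $C$ can be chosen so the output at position $t$ is $e(q^{(t)})$. For the inductive step, suppose blocks $1,\dots,\ell$ already emit, at each position $t$, the encoded state $e(q_\ell^{(t)})$ of the $\ell$-th cascade component while the residual stream still carries the original input encoding. Since the input projection of block $\ell+1$ is linear (or a shallow MLP), it can read the pair $(q_\ell^{(t-1)},u_t)$ off the residual stream and map it to the symbol of the enlarged alphabet $Q_\ell\times\Sigma$ that the recurrence of block $\ell+1$ expects; applying the base-case construction for $M_{\ell+1}$ then produces output $e(q_{\ell+1}^{(t)})$, with the residual connection again preserving the input for later blocks. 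If the required block types do not line up with the interleaving pattern, one pads the cascade with trivial one-state automata, which either block type simulates. Composing all blocks reproduces the state sequence of the full cascade, and a final linear head maps accepting tuples to $1$.

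The main obstacle is the inter-block plumbing rather than the recurrence itself: one must check that (i) the residual/skip structure of the hybrid block actually keeps a decodable copy of the original input available at every layer, so that the ``read the pair $(q_\ell,\sigma)$'' step is realizable by the block's input projection, and (ii) the off-by-one between ``block $\ell$ emits $q_\ell^{(t)}$ at time $t$'' and ``block $\ell+1$ needs $q_\ell^{(t-1)}$ at time $t$'' is absorbed --- either by adopting the cascade convention in which $M_2$ reads the post-transition state of $M_1$ (which does not change the recognized language class) or by a one-step delay implementable inside the SSM recurrence. A secondary concern, already flagged in the footnote to Lemma \ref{lemm:modcount}, is finite-precision drift of the AUSSM phases accumulating across the cascade; as there, I would run the argument in exact arithmetic and defer the floating-point bound to \cref{appendix:flt}.
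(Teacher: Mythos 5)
Your proposal is correct and follows essentially the same route as the paper's proof: an inductive layer-stacking construction in which skip connections carry the original input forward, idempotent padding layers absorb the alternating Mamba/AUSSM ordering, and the base cases are \cref{lemm:modcount} for AUSSM and Sarrof et al.'s set-reset result for Mamba. The one point you leave open --- the off-by-one between the emitted state $q_\ell^{(t)}$ and the needed $q_\ell^{(t-1)}$ --- is resolved in the paper by the second of your two options, namely a dedicated Mamba layer that reads out the last-but-one symbol (\cref{fact:last}, Sarrof et al., Lemma 17); I would drop the convention-changing alternative, since redefining the cascade to read the post-transition state would require a separate argument that the Krohn--Rhodes decomposition still applies.
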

\begin{proof}[Proof sketch]
    \cite[Lem. 19]{sarrof2024expressivecapacitystatespace} showed that multilayer Mamba SSMs can implement cascade products of Mamba layers simulating set-reset automata, which, by Schützenberger's theorem \cite{CHOMSKY1963118}, means they can recognize any star-free language.
    This can easily be extended to show that any automaton simulated by Mamba or AUSSM layers can be joined into a cascade product within alternating Mamba and AUSSM blocks. This works because we can always add additional padding layers at any point in the hybrid SSM without changing the behavior of the remainder of the SSM.
\end{proof}
\begin{restatable}{theorem}{expressivity}\label{cor:expressivity}
     Hybrid Mamba+AUSSM can recognize any solvable language, that is, any regular language whose syntactic monoid does not contain non-solvable subgroups.
\end{restatable}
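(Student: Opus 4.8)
The plan is to derive the corollary directly from the Krohn--Rhodes prime decomposition theorem, using \cref{lemm:modcount} and \cref{lemm:cascade} to realize the two kinds of building blocks that appear in the decomposition. First I would fix the notion of "recognize": a regular language $L$ is solvable exactly when every subgroup of its syntactic monoid $M(L)$ is a solvable group, equivalently when every such subgroup has a composition series all of whose Jordan--Hölder factors are cyclic groups of prime order $\Z_p$. By the correspondence between $L$, its minimal DFA, and $M(L)$, it suffices to exhibit a hybrid Mamba+AUSSM whose state trajectory tracks (a homomorphic image of) the transition monoid of the minimal automaton of $L$, together with a readout that sends accepting states to $1$ and all others to $0$; the final linear map $C$ in the SSM block implements this readout, so membership in $L$ can be recovered from the network output.

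Next I would run the argument in four steps. (i) Invoke Krohn--Rhodes: the minimal automaton of $L$ divides a cascade (wreath) product of reset (flip-flop) automata and simple-group permutation automata, where the simple groups occurring are precisely the Jordan--Hölder factors of the subgroups of $M(L)$. (ii) Use solvability: since all subgroups of $M(L)$ are solvable, every simple group appearing in step (i) is cyclic of prime order, so all permutation automata in the cascade are cyclic permutation automata. (iii) Realize the pieces: by \cref{lemm:modcount} a single AUSSM layer simulates any cyclic permutation automaton (a mod-$p$ counter), and by \cite[Lem.~19]{sarrof2024expressivecapacitystatespace} together with Schützenberger's theorem Mamba layers simulate reset automata and hence handle the aperiodic (star-free) part. (iv) Assemble the cascade: by \cref{lemm:cascade}, interleaved Mamba and AUSSM blocks implement the cascade product of the automata simulated by the individual layers, inserting identity/padding layers wherever the wreath-factor ordering requires. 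Composing these, the hybrid simulates an automaton that divides the minimal automaton of $L$, and the readout recovers $L$.

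The hard part, and the place I would actually write out rather than cite, is the bookkeeping around the word "divides" in Krohn--Rhodes, since simulation there is only up to subautomata and homomorphic images. I would verify that recognizability by the hybrid is closed under both operations: the language of a sub-DFA is still recognized, and the language of a quotient is the preimage of a recognized language under a monoid homomorphism, hence still recognized. I would also check that the left-to-right layer order of the hybrid can match the wreath-product order of the factors, which is exactly what the padding-layer remark in \cref{lemm:cascade} provides. A secondary technical caveat is finite precision: the cyclic counters of \cref{lemm:modcount} are exact only in the idealized real-arithmetic model, so the statement is understood in that model, or equivalently for input lengths below the precision-dependent bound of \cref{appendix:flt}. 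Modulo these checks, the corollary follows immediately from the three ingredients above.
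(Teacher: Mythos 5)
Your proposal is correct and follows essentially the same route as the paper: Krohn--Rhodes decomposition, the observation that solvability forces all group factors to be cyclic of prime order, \cref{lemm:modcount} for the cyclic permutation automata, \cite[Lem.~19]{sarrof2024expressivecapacitystatespace} for the set-reset automata, and \cref{lemm:cascade} to assemble the cascade. The only difference is that you explicitly flag the bookkeeping around ``divides'' (closure under subautomata and quotients) and the finite-precision caveat, both of which the paper subsumes into its cited Fact on solvable languages and its precision remark rather than verifying inline.
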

\begin{proof}[Proof sketch]
    By \cref{lemm:modcount}, an AUSSM layer can simulate cyclic group automata, and \cite[Lem. 19]{sarrof2024expressivecapacitystatespace} showed that a Mamba layer can simulate set-reset automata.
    Now, the Krohn-Rhodes theorem states that every finite automaton divides a cascade of alternating aperiodic monoids (set-reset automata) and finite simple groups (permutation automata).
    A finite group is solvable iff its decomposition series contains only cyclic groups of prime order (cyclic group automata with prime-length cycles) \cite[Ex. 3.4.8]{Dummit_Foote_2004}.
    By \cref{lemm:cascade}, hybrid Mamba+AUSSM can implement the Krohn-Rhodes cascade product of set-reset automata (Mamba) and cyclic group automata (AUSSM). 
    Therefore, it can recognize all solvable languages.
    (cf. \cite[Thm. 21]{sarrof2024expressivecapacitystatespace}).
\end{proof}
Regular languages that require representing more complex non-solvable group transformations, such as the word problem in $S_5$ or $A_5$, lie outside of this set, and according to widely held assumptions about computational expressivity theory, cannot be modeled by diagonal SSMs  \cite{merrill2024illusion}.
This means combining Mamba with AUSSM maximizes the representational capacity of diagonal SSMs (short of lifting the diagonal transition restraint, which leads to poor scaling).

\section{Separable Convolution Kernels for Scalable Adaptive SSMs} \label{section:implementation}

One of the main challenges in designing SSMs is the computational efficiency of the implementation. Simulating the discrete dynamical system in Equation \ref{eqn:AUSSMODEDiscretized} naively is not computationally efficient as it leads to quadratic memory scaling when it is parallelized using the typical SSM convolution procedure (Appendix \ref{appendix:complexity:ssmConvolution}). 
In this section, we introduce a separable kernel formulation for the efficient computation of adaptive time-varying SSMs. 
Our formulation works directly in the convolution form of the SSM and instantly exposes the separability and is applicable to a wider class of adaptive SSMs as shown below. We note here that the separable kernel formulation is not specialized for the AUSSM, but \textbf{applies to any class of SSMs that are simultaneously diagonalizable} \cite{anton2013elementary}. We therefore formulate the theory in the general case and provide sufficient conditions to apply the theory in practice. 

The general convolution formulation of general SSMs in Equation \ref{eqn:generalConvolutionForm} is typically used to convert a discrete dynamical system form of an SSM to an efficient parallel implementation. This form is abstracted as applying a convolution on the input, following the equation
\begin{equation}
    y(t) = \sum_{k \leq t} K(t, k) u(k) \, .
\end{equation}
The reason for the quadratic memory scaling of the convolution operation can be observed in this abstracted form as storing the $K(t,k)$ convolution kernel requires, in general, $O(L^2)$ memory, where $L$ is the sequence length.

Separable convolution kernels have the additional property that $K(t, k) = \textcolor{teal}{f(t)} \, \textcolor{blue}{g(k)}$ that enables writing the output as
\begin{equation}
    y(t) = \textcolor{teal}{f(t)} \sum_{k\leq t} \textcolor{blue}{g(k)} u(k) \,.
\end{equation}
Storing the additional $f(t)$ and $g(k)$ requires only an additional $O(2L)$ memory. This is comparable to the non-adaptive case, which has a scaling $O(L)$, producing asymptotic memory efficiency matching that of the non-adaptive SSM with only a constant factor increase in memory use. The above convolution formulation can be efficiently computed in $O(\log(L))$ time by using the parallel prefix sum algorithm \cite{blelloch1990vector}. It is instructive to apply this formulation to an existing SSM to identify efficient computational structures - we use the partially adaptive Selective State Space Model (S6) used in the popular Mamba model \cite{Gu2023MambaLS}. 
\paragraph{Separable Convolution Formulation of Mamba Selective SSM (S6):} In S6, the matrices $C$ and $B$ vary with input (making the SSM selective to input), in addition to the step size $\Delta$ varying with time. This generalization results in the output of the SSM written in the convolution form as:
\begin{equation}
y_{t i} = \sum_{k \leq t} \sum_{j} C_{t j} \, \exp((t \Delta_{t i} -k \Delta_{k i} ) A_j) \, \Delta_{k i} B_{k j} \, u_{i}(k) \, .
\end{equation}
Here, the input $u \in \mathbb{R}^d$ is a vector and the SSM is applied to each input dimension in parallel. The index $i$ is over the input dimension $d$, and $j$ indexes the hidden state dimension $n$. In the general convolution formulation we showed above, the S6 output is formulated as applying the convolution kernel $K(t, k) = C_{t j} \, \exp((t \Delta_{t i} -k \Delta_{k i} ) A_j) \Delta_{k i} B_{k j}$
on the inputs over time $u_{i}$. Note here that, unlike typical time-invariant SSMs, the S6 convolution kernel is unique to each $y$ as $\Delta, B, C$ change with time. Since $K(t, k) = \textcolor{teal}{\Big( C_{t j} \, \exp(t \Delta_{t i} A_j) \Big)} \textcolor{blue}{\Big( \exp(- k \Delta_{k i} A_j) \Delta_{k i} B_{k j} \Big)}$, the kernel is separable and we can use the procedure we introduced above to compute the S6 output in a time and memory-efficient manner (see Appendix \ref{appendix:implementation:pytorch} for an efficient PyTorch Implementation of the S6).

\begin{figure*}[t!]
    \centering
    \begin{subfigure}[t]{0.49\textwidth}
        \centering
        \includegraphics[width=\linewidth]{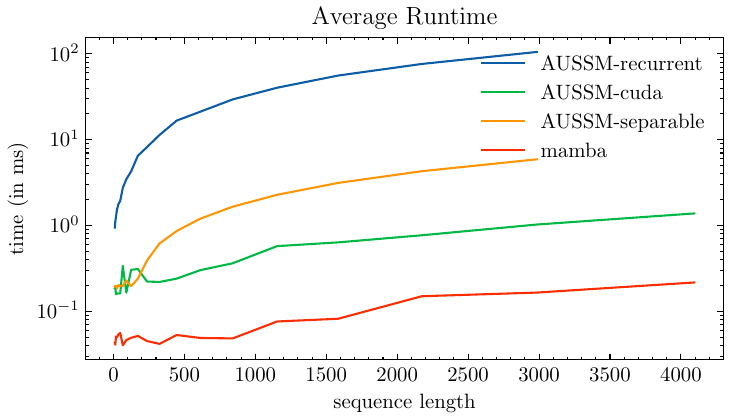}
        \caption{}
    \end{subfigure}%
    ~ 
    \begin{subfigure}[t]{0.49\textwidth}
        \centering
        \includegraphics[width=\linewidth]{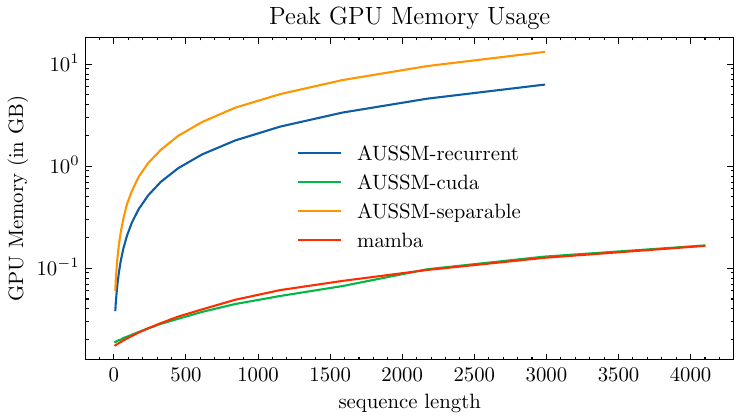} 
        \caption{}
    \end{subfigure}
    \caption{\textbf{AUSSM with separable convolution achieves efficient runtime and memory scaling for fully adaptive SSMs.} The runtime and peak memory usage of four implementations are compared: recurrent PyTorch AUSSM, separable PyTorch AUSSM, our optimized CUDA AUSSM kernel, and the Mamba CUDA kernel. (a) The AUSSM CUDA implementation outperforms both PyTorch baselines in speed and memory efficiency, and approaches the memory efficiency of Mamba despite AUSSM's full adaptive recurrence. Notably, the PyTorch implementation of the separable convolution has better runtime efficiency compared to the recurrent implementation, albeit at a higher memory cost. (b) The AUSSM CUDA kernel has a significantly lower memory footprint, identical to that of the partially adaptive and optimized Mamba CUDA kernel. }
    \label{fig:AUSSMPerformance}
\end{figure*}

\paragraph{Separable Convolution Formulation of AUSSM:} In the case of S6, the separable formulation was easily revealed directly from the convolution form. In the case of AUSSM, this separability is not possible for the most general case. However, when the set of recurrent matrices $A_t$ is simultaneously diagonalizable, the output of the AUSSM in Equation \ref{eqn:AUSSMODEDiscretized} can be formulated as (See Appendix \ref{appendix:AUSSMDerivation} for details)
\begin{equation}
    y_{t i} = \Re \left[ \sum_{k \leq t} \sum_{j} \textcolor{teal}{C_{j} \, \exp(\mathbf{i} \sum_{l \leq t} \theta_{A_{l i j}})} \, \textcolor{blue}{\frac{\Delta_{k i} B_{j}}{\exp(\mathbf{i} \sum_{l \leq k} \theta_{A_{l i j}})}} \, u_{i}(k) \right] \, .
\end{equation}
Here, $\theta_{A_{l i j}} = \sum_{r} x_{i j r} u_{r}(k) + x^{\text{bias}}_{i j}$ is the angle argument of the unitary discretized $A'$ matrix in the polar form. Note here that as the AUSSM has complex eigenvalues, the final output is also complex, and we use only the real part of the output with the function $\Re[\cdot]$. With this formulation of the AUSSM recurrence, a memory and time efficient computation of the adaptive SSM is obtained, however, implementing this convolution directly in PyTorch can still result in high memory usage as the constant in the $O(L)$ is $b d n$ where $b$ is the batch size, $d$ is the input dimension and $n$ is the hidden dimension resulting in a large constant factor. We therefore create a CUDA kernel, where this additional complexity is hidden and the hidden state is only partially materialized in the CUDA kernel (Appendix \ref{appendix:implementation:cuda}).

Another approach to improving the performance of SSMs is tensor core optimization. In tensor core optimization, special hardware features in NVIDIA GPUs called tensor cores are used to speed up matrix computations inherent in SSM implementations. This approach is not an entirely new algorithm with improved scaling behavior, but an implementation approach that enables speed-up in the special case of GPU architectures where tensor cores are available - which is most high-end GPUs available in the market. Experimental evaluations have also shown that tensor core optimization approaches can provide a constant factor increase in performance on high-end GPU hardware, but retain the same scaling behavior - the big-O scaling factor. Recent works have utilized this approach to improve the performance of time-varying SSMs, but side-step the fundamental algorithmic limitations of the problem. In contrast, our proposed algorithm for adaptive SSMs can be applied in more general cases and still provide guaranteed algorithmic scaling behavior even in GPUs where tensor core optimization is not available - for example, edge computing, other GPU makers.

\section{Experiments}\label{sec:experiments}

We empirically validate the theoretical claims of AUSSM by evaluating both its computational efficiency and expressivity. First, we benchmark runtime and memory usage across four implementations, including our CUDA-optimized AUSSM and Mamba. Second, we assess expressivity on a suite of algorithmic tasks requiring formal language recognition, such as parity and modular arithmetic. Finally, we evaluate real-world applicability on long-sequence classification and regression tasks, demonstrating that the improved expressivity of AUSSM translates to practical performance gains.

\subsection{Scalability Evaluation}

We benchmark four implementations of AUSSM to assess efficiency: (1) a naive PyTorch recurrent version, (2) a PyTorch version using separable convolutions with a higher constant factor in the linear scaling, (3) our custom CUDA kernel, and (4) the Mamba CUDA kernel as a baseline. Experiments were run on a single NVIDIA 2080 Ti GPU with 11 GB VRAM. As shown in Figure \ref{fig:AUSSMPerformance}, the CUDA-based AUSSM achieves significantly lower memory usage and faster inference compared to the PyTorch variants, approaching the efficiency of Mamba despite full adaptivity. The separable PyTorch implementation improves runtime over the recurrent baseline but incurs higher memory costs. Overall, our separable formulation paired with a low-level CUDA kernel enables AUSSM to scale to long sequences efficiently, validating the theoretical benefits of scalability.

\begin{table}[t]
    \centering
    
    \begin{tabular}{l|l|cccccc}\toprule
        & \textbf{Task} & \begin{tabular}{@{}c@{}}\textbf{Mamba} \\ \textbf{Complex}\end{tabular} & \begin{tabular}{@{}c@{}}\textbf{Mamba} \\ \textbf{[-1,1]}\end{tabular} & \textbf{xLSTM} & \textbf{Mamba} & \textbf{AUSSM} & \begin{tabular}{@{}c@{}}\textbf{AUSSM} \\ \textbf{Hybrid}\end{tabular} \\ 
        \toprule
        \multirow{4}{*}{\begin{tabular}{@{}l@{}}C.S\end{tabular}} & repetition & 0.09 & 0.10 & 0.09 & \third{0.15} & \second{0.1993} & \first{0.45} \\ 
        & bucket sort & 0.21 & \second{0.91} & 0.7 & 0.69 & \first{0.92} & \third{0.83} \\ 
        & majority count & 0.19 & 0.31 & \first{0.5} & \second{0.45} & 0.096 & \third{0.37} \\ 
        & majority & 0.13 & \third{0.63} & \second{0.64} & \first{0.69} & 0.57 & \second{0.64} \\ 
        \midrule
        \multirow{2}{*}{\begin{tabular}{@{}l@{}}D.C.F\end{tabular}} & solve equation & \first{0.43} & \second{0.24} & \second{0.24} & 0.05 & \third{0.07} & \third{0.07} \\ 
        & mod arith & 0.12 & 0.116 & \second{0.15} & 0.04 & \third{0.13} & \first{0.23} \\ 
        \midrule
        \multirow{3}{*}{Reg.} & mod arith wo bra & 0.23 & 0.24 & \first{1.0} & 0.13 & \third{0.48} & \second{0.53} \\ 
        & cycle nav & 0.42 & \second{0.91} & 0.8 & {0.86} & \first{1.0} & \first{1.0} \\ 
        & parity & 0.27 & \first{1.0} & \first{1.0} & \second{0.13} & \first{1.0} & \first{1.0} \\ 
        \bottomrule
    \end{tabular}
    \vspace{1em}
    \caption{\textbf{AUSSM and hybrid AUSSM+Mamba models outperform Mamba on tasks requiring counting and structured memory.} We evaluate xLSTM, Mamba, AUSSM, and a hybrid AUSSM+Mamba model on a suite of algorithmic reasoning tasks. The table shows the scaled test accuracies on each task. The tasks are grouped by their position in the Chomsky hierarchy (C.S: context-sensitive, D.C.F: Deterministic Context Free, Reg. Regular). AUSSM achieves perfect accuracy on tasks like \texttt{parity} and \texttt{cycle navigation}, which require modulo counting, validating its theoretical expressivity. While Mamba performs better on tasks such as \texttt{majority count}, the hybrid model consistently achieves the best or near-best performance across most tasks, demonstrating that combining adaptive unitary dynamics with real-valued recurrence yields a more expressive and general-purpose architecture. The scaled accuracies for xLSTM and Mamba are obtained from \cite{Beck2024xLSTMEL}.}
    \label{table:algorithmicBenchmark}
\end{table}

\subsubsection{Expressivity Evaluation}

To evaluate formal expressivity, we benchmark AUSSM, Mamba, xLSTM \cite{Beck2024xLSTMEL}, and a hybrid AUSSM+Mamba model on a suite of algorithmic tasks drawn from various levels of the Chomsky hierarchy. These include tasks requiring counting (e.g., parity, modular arithmetic), memory manipulation (e.g., repetition), and symbolic reasoning (e.g., equation solving). Models are trained on sequences up to length 40 and tested on lengths up to 256 to assess length generalization performance. We evaluate all the models using scaled validation accuracies to account for the differing number of output classes in the algorithmic tasks.

The results are shown in Table \ref{table:algorithmicBenchmark}. The AUSSM achieves perfect accuracy on tasks that require modulo counting and cycle tracking, validating its theoretical ability to simulate cyclic group automata via unitary and adaptive dynamics. In contrast, Mamba fails to generalize on these tasks, consistent with the limitations of partially adaptive and dissipative models. However, AUSSM performs poorly on tasks such as majority or equation solving, where dissipative dynamics may be required for stability and information aggregation. Notably, the AUSSM hybrid model performs significantly better than all existing RNNs, including the xLSTM, suggesting that AUSSMs and Mamba blocks are synergistic and exhibit performance benefits that neither individual model provides. These results empirically support our theoretical claim that hybrid models combining AUSSM and Mamba maximize the expressivity of diagonal SSMs under the Krohn–Rhodes framework.

\begin{table}[t]
\footnotesize
    \centering
    \begin{tabular}{l|cccccc|c} \toprule
        Dataset & Heartbeat & SCP1 & SCP2 & Ethanol & Motor & Worms & Avg \\ 
        Seq. len. & 405 & 896 & 1152 & 1751 & 3000 & 17984 & \\
        \# classes & 2 & 2 & 2 & 4 & 2 & 5 & \\
        \toprule
        S5           & 47.8 $\pm$ 3.1 & 74.2 $\pm$ 2.1 & 10.2 $\pm$ 3.3 & 0.8 $\pm$ 3.5 & 6.0 $\pm$ 3.9 & 79.9 $\pm$ 4.1 & 36.4 \\  
        S6           & \first{53.0 $\pm$ 8.3} & 65.6 $\pm$ 2.7 & -0.2 $\pm$ 9.4 & 1.9 $\pm$ 6.4 & 2.6 $\pm$ 4.7 & 81.3 $\pm$ 6.2 & 34.0 \\  
        linoss       & 51.6 $\pm$ 3.7 & \first{75.6 $\pm$ 2.6} & \first{17.8 $\pm$ 8.1} & \first{6.5 $\pm$ 0.6} & \first{20.0 $\pm$ 7.5} & \first{93.8 $\pm$ 4.4} & \first{44.2} \\ 
        Mamba        & 52.4 $\pm$ 3.8 & 61.4 $\pm$ 1.4 & -3.6 $\pm$ 3.9 & 3.9 $\pm$ 4.5 & -4.6 $\pm$ 4.5 & 63.6 $\pm$ 15.8 & 28.8 \\
        \midrule
        Hybrid & \first{53.0 $\pm$ 3.8} & 64.2 $\pm$ 4.9 & 4.2 $\pm$ 6.8 & 4.7 $\pm$ 4.1 & 2.6 $\pm$ 5.5 & 82.6 $\pm$ 3.4 & 35.2 \\ 
        \bottomrule
    \end{tabular}
    \vspace{1em}
    \caption{\textbf{Hybrid AUSSM with Mamba achieves competent performance on long time-series classification benchmarks.} We evaluate the hybrid model on six UEA datasets spanning a wide range of sequence lengths and domains. The table shows the scaled test accuracies for the different models compared to the hybrid AUSSM. The hybrid AUSSM consistently outperforms the base Mamba and achieves competent accuracy across datasets. These results demonstrate that the increased expressivity of AUSSM, when combined with Mamba’s stability, translates into strong real-world performance even on long and complex sequence data. Our model is evaluated on a statistically rigorous test with 20 different seeds to obtain a better estimate of test accuracy to reduce the reliance on arbitrary evaluation seeds used in prior works \cite{Rusch2024OscillatorySM}. }
    \label{table:UEA}
\end{table}

\subsection{Long Time-Series Classification and Regression Benchmark}

To evaluate the practical benefits of our architecture, we test the hybrid AUSSM+Mamba model on a suite of UEA long-time-series classification benchmarks \cite{Walker2024LogNC} and the challenging Weather regression benchmark. We take the AUSSM block as a drop-in replacement for an existing Mamba backbone. Specifically, we randomly selected a fixed number of Mamba blocks in a deep Mamba SSM model and replaced them with the AUSSM blocks. The UEA tasks feature much longer sequences than the algorithmic benchmark, with lengths ranging from 405 in the \texttt{Heartbeat} dataset to over 17{,}000 in the \texttt{Worms} dataset. For regression, we use the challenging \texttt{Weather} dataset where climate variables are forecasted $720$ steps into the future, given a window of $720$ timesteps. These benchmarks present a more realistic and diverse set of challenges, which includes physiological signals, chemical concentrations, motion data, and climate, each requiring the model to capture both local and global temporal dependencies.

For the UEA benchmarks and the weather dataset, we used identical hyperparameter strategies to those used by the models we compare against. During testing, we found that previous works used five randomly chosen seeds to evaluate the test performance. This is not easily reproducible, as the particular choice of the seeds influences the specific test datasets that are chosen for evaluation and may produce biased results. We instead use a statistically rigorous technique where the best hyperparameter model is chosen based on the validation set performance on five random seeds, and the test accuracy is evaluated on random train-test splits on the selected model with 20 different seeds to produce better test accuracy estimates. We scaled test accuracies with the baseline and report the results in Table \ref{table:UEA}. The hybrid AUSSM+Mamba model achieves substantial improvements over the partially adaptive Mamba SSM on average, even under the modified testing protocol. The results demonstrate that the improved expressivity of AUSSM carries over to real-world tasks when appropriately combined with the stability and inductive biases of partially adaptive models. Notably, the hybrid model achieves these results while maintaining high efficiency: all experiments except \texttt{EigenWorms} were run on a single NVIDIA 2080 Ti GPU with 11 GB of VRAM, in contrast to the large-scale hardware (e.g., 100 GB A100 GPUs) typically used for long-sequence modeling. The \texttt{Eigenworms} dataset was trained on an L4 GPU with 23 GB VRAM due to its larger size.

\begin{table}[t]
    \centering
    \begin{tabular}{l|c}\toprule
        \textbf{Model} & \textbf{Mean Absolute Error} $\downarrow$ \\ \midrule
        Informer & 0.731 \\ 
        LogTrans & 0.773 \\ 
        LSTMa & 1.109 \\ 
        LSTnet & 0.757 \\ 
        S4 & 0.578 \\ 
        LinOSS & \third{0.508} \\ 
        Mamba & \second{0.464} \\
        AUSSM Hybrid & \first{0.342} \\ \bottomrule
    \end{tabular}
    \vspace{1em}
    \caption{\textbf{Hybrid AUSSM+Mamba achieves state-of-the-art performance on long time-series weather forecasting benchmark.} We evaluate AUSSM against 7 different models on the challenging weather forecasting benchmark, where climate variables are forecasted 720 timesteps into the future. AUSSM Hybrid achieves state-of-the-art performance on the task, improving on the base Mamba model and all the other models.}
    \label{table:weather}
\end{table}

\section{Discussion}\label{section:discussion}

In this work, we address the expressivity-scalability tradeoff in state space modeling. Existing SSMs like Mamba are scalable but limited in expressivity due to fixed or partially adaptive recurrence. On the other hand, more general LTV SSMs are more expressive but do not have an efficient and scalable parallel implementation. We introduce the Adaptive Unitary State Space Model (AUSSM), which uses input-dependent skew-symmetric recurrence to achieve both unitary evolution and high expressivity. We showed that theoretically, AUSSM can implement modulo counters and simulate a broad class of regular languages, maximizing expressivity among diagonal SSMs when combined with Mamba under the Krohn–Rhodes framework. To ensure scalability, we develop a separable convolution formulation and a custom CUDA kernel, enabling linear-time training despite full adaptivity. Experimental analysis on standard benchmark tasks showed that AUSSM achieves strong performance on symbolic reasoning tasks and serves as an effective drop-in enhancement to Mamba for long-range sequence modeling. Together, these results suggest that adaptive unitary recurrence is a powerful inductive bias for both symbolic and continuous sequence tasks.

\textbf{Limitations.} Despite the generality of our framework, our approach relies on the assumption that recurrent matrices are simultaneously diagonalizable, limiting the ability to express languages beyond solvable regular languages. While the separable kernel has identical scaling behavior to efficient LTI models, it is still a constant factor higher. Hybrid AUSSM+Mamba models show promise, but the best strategy for combining blocks is not yet well understood. Another limitation in the parallel scan-based algorithm we proposed is that, in the case of high-end NVIDIA GPUs, alternate tensor core approaches may provide even better absolute speedup, although with the same scaling behavior. An alternate tensor core-based algorithm for AUSSM is an interesting avenue for future work in real-world applications. Finally, due to resource limitations, our evaluations are limited to modest-scale tasks; further validation on foundation-model-scale benchmarks is needed.

\section{Acknowledgements}
We thank Reda Boumasmoud for his input and suggestions on an earlier draft of this manuscript. We also thank Michael Hahn for a useful conversation about SSM expressivity. T.A.K. acknowledges the Kempner Institute for the Study of Natural and Artificial Intelligence at Harvard University for funding during work on this article. T.J.S. acknowledges funding from ONR N00014-23-1-2069. A.K. and H.T.S. acknowledge NSF for EAGER: Neural Networks that Temporally Change (NOTCH).

\bibliography{references}
\bibliographystyle{unsrt}

\newpage

\appendix

\section{Related Work}\label{appendix:related-work}

In this appendix section, we provide an extensive discussion of related work on unitary RNNs, conserved dynamics in the brain, and the computational complexity of RNNs and SSMs.

\subsection{Unitary RNNs}

Recurrent neural networks (RNNs) are powerful models for processing sequential data, but their training is often hindered by the vanishing and exploding gradient problem, which limits their ability to capture long-range dependencies \cite{hochreiter2001gradient}. A major source of this instability is the repeated multiplication of the hidden state by the recurrent weight matrix, which causes gradients to exponentially decay or grow depending on the spectral norm of the matrix. To address this, a prominent line of research constrains the recurrent dynamics to be unitary, ensuring that the hidden state evolution preserves its norm through time. Such norm-preserving dynamics prevent gradient magnitude degradation and are mathematically analogous to energy-conserving, reversible dynamical systems.

The first major work in this direction was Unitary Evolution Recurrent Neural Network (uRNN) \cite{Arjovsky2015UnitaryER} that proposed parameterizing the recurrent weight matrix as a product of structured unitary matrices (including diagonal phase matrices, Fourier transforms, and Givens rotations) to ensure exact unitarity while retaining efficient computation and gradient flow. However, this parameterization limited expressivity due to its constrained structure. To overcome this, Full-Capacity Unitary RNN \cite{WisdomNeurIPS2016} optimized directly over the unitary group using manifold optimization techniques. By employing the Cayley transform and optimization on the Stiefel manifold, they achieved full representational power while preserving unitary constraints.

Subsequent research explored alternative approaches for maintaining orthogonality and improving trainability. Efficient Unitary Neural Networks (EUNN) \cite{pmlr-v70-jing17a} used parameter-efficient decompositions enabling flexible trade-offs between computational cost and expressivity. Vorontsov et al. Orthogonality regularization methods softly constrain the recurrent weight matrix toward being orthogonal rather than enforcing strict unitarity, allowing some deviation to improve learning flexibility \cite{pmlr-v70-vorontsov17a}. Similarly, Mhammedi et al. \cite{pmlr-v70-mhammedi17a} developed a real-valued orthogonal RNN based on Householder reflections, which guarantees orthogonality through efficient matrix parameterizations. Later, Lezcano-Casado and Martínez-Rubio \cite{pmlr-v97-lezcano-casado19a} introduced an elegant exponential parametrization of orthogonal and unitary matrices via skew-symmetric matrices, offering a smooth and numerically stable way to maintain orthogonality during training.

A key challenge in unitary and orthogonal RNNs is the choice of suitable nonlinearities. Standard nonlinearities such as ReLU or tanh can break the norm-preserving property of the recurrent map, leading to unstable or dissipative dynamics. To address this, \cite{Arjovsky2015UnitaryER} introduced modReLU, a complex-valued nonlinearity that preserves the phase of hidden activations while applying a learned threshold on their magnitudes. Subsequent studies explored alternatives such as zReLU \cite{Guberman16}, scaled tanh, and phase-preserving nonlinearities for complex-valued networks. In \cite{pmlr-v48-henaff16}, the authors analyzed the interplay between orthogonality, nonlinearity, and gradient flow, providing theoretical insights into how orthogonal constraints help maintain long-term dependencies even in nonlinear regimes. More recently, Chang et al. \cite{chang2019antisymmetricrnndynamicalviewrecurrent} proposed Antisymmetric RNNs, where the recurrent weight matrix is constrained to be near-skew-symmetric, thereby approximating a Hamiltonian or energy-conserving flow; this represents a bridge between strict unitarity and continuous-time neural dynamics.

The success of unitary and orthogonal RNNs has motivated several extensions. For example, Tallec and Ollivier \cite{DBLP:journals/corr/abs-1804-11188} analyzed time-warping effects and timescale adaptation in RNNs, showing how orthogonality can help control effective memory timescales. Lezcano-Casado \cite{Casado2019TrivializationsFG} further generalized the parameterization of orthogonal operators to improve optimization stability across different architectures. The insights from unitary evolution have also influenced more recent Structured State Space Models (SSMs) \cite{gu2022efficiently}, which impose spectral stability and linear time-invariant dynamics to achieve long-range sequence modeling with recurrent efficiency. These connections underscore the broader relevance of norm-preserving and energy-conserving formulations for building stable, interpretable, and trainable dynamical models.

\subsection{Unitary Dynamics in the Brain}

A growing body of experimental and theoretical work indicates that neural population activity often evolves according to dynamics that are, in important respects, conserved or weakly dissipative, with clear implications for how the brain stores and transforms information. Conserved dynamics here refers to trajectories or modes that preserve key quantities (e.g., norms, phase relationships, low-dimensional energy-like functions) over behaviorally relevant timescales, producing smooth, reversible, or rotational population flows rather than rapidly diffusive or purely dissipative responses. Empirically, such phenomena appear across modalities and brain areas: propagating and wave-like activity has been observed in sensory and motor cortices \cite{Rubino2006PropagatingWM,SanchezVives2000CellularAN,Ermentrout2001TravelingEW,Muller2018CorticalTW}, low-dimensional structured trajectories that persist across trials and conditions have been reported in motor and prefrontal populations \cite{gallego2017neural,mante2013context,gallego2017neural,kaufman2014cortical,gallego2020long}, and cortical responses frequently exhibit coherent oscillatory/rotational components that suggest near-unitary evolution within a task-relevant subspace \cite{mante2013context,russo2018motor}. These conserved modes are often embedded within a larger high-dimensional network state, but they dominate the behaviorally relevant dynamics and appear to support robust temporal computation, short-term memory, and smooth transformation between input and output representations.

Theoretical and modeling studies have proposed multiple mechanistic origins for conserved or weakly dissipative neural dynamics. Balanced excitatory–inhibitory network regimes can produce rich transient dynamics with slow decay or quasi-conserved activity on short timescales; classic balanced-network analyses show how tightly coupled excitation and inhibition enable irregular activity while constraining macroscopic statistics, and follow-up work has shown how such balance supports structured transient trajectories \cite{van1996chaos,hennequin2014optimal}. Network architectures with antisymmetric or near-skew-symmetric connectivity produce rotational and energy-like flows that are approximately conservative; control- and dynamical-systems-oriented analyses have demonstrated that small departures from strict antisymmetry (e.g., weak damping or inputs) permit flexible routing and readout while retaining the stability advantages of norm-preserving flows \cite{buonomano2009state,sussillo2009generating}. In parallel, reservoir- and recurrent-network modeling (including trained networks initialized in richly recurrent regimes) has shown that networks can learn to generate low-dimensional conserved trajectories that implement computations (e.g., context-dependent integration, short-term memory) with high robustness to noise and parameter changes \cite{stringer2019high,mastrogiuseppe2018linking}.

Methodologically, the identification of conserved modes in neural data relies on dimensionality-reduction and dynamical-systems tools that explicitly search for rotational, low-dimensional, or wave-like structure. Approaches range from linear subspace methods that highlight persistent modes to more specialized decompositions and dynamical fits that extract antisymmetric components, traveling-wave decompositions, or stable latent manifolds; these analyses have repeatedly revealed that a relatively small number of conserved or near-conserved modes often capture most of the task-relevant variance, even when single-neuron responses are heterogeneous \cite{mante2013context,kaufman2014cortical,sussillo2014neural}. Importantly, conserved neural dynamics appear functionally beneficial: by concentrating computation in norm-preserving subspaces, the brain can transform and transmit signals with minimal degradation, enabling temporally-extended operations such as sequence generation, motor command shaping, and transient working memory without continual external reinforcement.

Finally, the convergence of empirical findings and theoretical models has motivated viewing cortical and subcortical circuits through the lens of structured dynamical primitives—rotations, waves, and nearly-Hamiltonian flows—that are neither purely feedforward nor purely dissipative. This perspective helps explain phenomena such as reliable single-trial trajectories, robustness of latent dynamics across learning and perturbation, and the coexistence of fast irregular activity with slow conserved modes \cite{gallego2017neural, gallego2020long, hennequin2014optimal}. Recognizing conserved dynamics in the brain also provides a bridge to normative and engineering approaches (e.g., unitary or antisymmetric RNNs, energy-based network formulations) that aim to replicate the computational advantages of biological circuits while offering interpretable and stable mechanisms for long-timescale processing.

\newpage
\subsection{Computational Expressivity Theory of Linear RNN Architectures}

\begin{table}[t]
\centering
\small
\renewcommand{\arraystretch}{1.2}
\begin{tabular}{lccc}
\toprule
\textbf{Model family} & \textbf{Constant diagonal} & \textbf{Adaptive diagonal} & \textbf{Adaptive non-diagonal} \\
\textbf{(examples)} & \textit{S4} & \textit{Mamba, AUSSM-hybrid (ours)} & \textit{DeltaProduct, RWKV-7} \\
\midrule
\textbf{Adaptive} & no & yes & yes \\
\textbf{Diagonal} & yes & yes & no \\
\textbf{Eigenvalues} & [0,1) & various & complex \\
\midrule
\textbf{Languages} & (subset of) star-free langs. & star-free to solvable langs. & permutation group langs.\\
\bottomrule
\end{tabular}
\caption{High-level comparison of three families of linear recurrent neural networks (LRNNs) by adaptivity and diagonality, and their confirmed expressivity under standard assumptions.}
\label{tab:lrnn_expressivity}
\end{table}

\begin{table}[t]
\centering
\small
\renewcommand{\arraystretch}{1.2}
\begin{tabular}{lcccc}
\toprule
\textbf{Model} & \textbf{Mamba} & \textbf{Mamba-negative} & \textbf{AUSSM-hybrid (ours)} & \textbf{Mamba-complex} \\
\midrule
\textbf{Adaptive} & indirect & indirect & yes (AUSSM) & indirect \\
\textbf{Diagonal} & yes & yes & yes & yes \\
\textbf{Eigenvalues} & $(0, 1)$ & $(-1, 1)$ & unitary (AUSSM) & complex \\
\midrule
\textbf{Star-free} & \cmark \citep[Thm. 4]{sarrof2024expressivecapacitystatespace} & \cmark \citep[Thm. 4]{sarrof2024expressivecapacitystatespace} & \cmark \citep[Thm. 4]{sarrof2024expressivecapacitystatespace} & \cmark \citep[Thm. 4]{sarrof2024expressivecapacitystatespace} \\
\textbf{Solvable} & \xmark \citep[Thm. 4]{sarrof2024expressivecapacitystatespace} & \xmark \citep[Thm. 2]{grazzi2025unlocking} & \cmark (\cref{cor:expressivity}) & \cmark \citep[Thm. 21]{sarrof2024expressivecapacitystatespace} \\
\textbf{Regular} & \xmark \citep[Thm 4.4]{merrill2024illusion} & \xmark \citep[Thm 4.4]{merrill2024illusion} & \xmark \citep[Thm 4.4]{merrill2024illusion} & \xmark \citep[Thm 4.4]{merrill2024illusion} \\
\bottomrule
\end{tabular}
\caption{More fine-grained comparison of design features and formal language expressivity of Mamba-like models.}
\label{tab:ssm_expressivity}
\end{table}

As linear RNNs (LRNNs), of which SSMs are a special case, have gained in performance and become a viable alternative to transformers for sequence processing tasks, their formal expressive power has garnered interest. 
LRNN denotes the family of recurrent neural networks whose transition function is a linear or affine transformation of the hidden state (note, however, that the linear transition may be a non-linear function of the input at each time step). 
In contrast, traditional RNNs such as the Elman RNN \citep{elman1990finding}, LSTM\cite{hochreiter1997LSTM}, or GRU\cite{cho-etal-2014-properties} update their hidden state non-linearly between time steps. 
State space models (SSMs) such as S4 \citep{gu2022efficiently} and Mamba \citep{Gu2023MambaLS} are a subtype of LRNNs motivated by continuous linear dynamical systems that are discretized to work recurrently as LRNNs.
In terms of formal expressivity, \cite{merrill2024illusion} and \cite{sarrof2024expressivecapacitystatespace} point out that most SSMs make architectural decisions that limit their ability to model formal languages. The main factors impacting expressivity are:
\begin{itemize}
    \item Adaptivity - Whether the transition matrix is held constant over time or is a (non-linear) function of the input at the current time step.
    \item Diagonality - Imposing that the transition recurrence is a diagonal or diagonalizable matrix (simultaneously for all inputs).
    \item Eigenvalue range - Restricting the transition recurrence to matrices with eigenvalues in a specific range (e.g., non-negative, real between -1 and 1, complex unitary, etc.).
\end{itemize}
As well as impacting their expressivity, these decisions also determine the scalability of the architecture, since certain restrictions allow for more efficient implementations, e.g., the product of diagonal matrices can be computed more efficiently than that of dense matrices. 
There appears to be a distinct tradeoff between expressivity and scalability, informally dubbed the parallelism tradeoff \cite{merrill-sabharwal-2023-parallelism}. 
See \cref{tab:lrnn_expressivity} for a comparative high-level overview of different families of LRNNs and their expressivity and relative scalability.

\paragraph{Adaptivity}
Some of the earlier SSM variants, such as S4, were non-adaptive (or time-invariant), making them very scalable through the use of convolution over the whole sequence using a precomputable constant convolution kernel. As \cite{merrill2024illusion} points out, input-independence ensures that the expressivity of SSMs is upper-bounded by the circuit complexity class TC0, while some regular languages require circuit complexity NC1 (it is widely assumed that TC0 != NC1).
More recent SSM architectures add adaptivity through various means, e.g., the Mamba recurrence is indirectly input-dependent via its time discretization factor $\Delta$.
Similarly, LRNNs such as RWKV-7 \cite{peng2025rwkv} or DeltaProduct \cite{siems2025deltaproductimprovingstatetrackinglinear}, and non-linear RNNs such as xLSTM \cite{Beck2024xLSTMEL}, are adaptive through mechanisms such as input-dependent gating factors. 
The transitions of our AUSSM component are directly input-dependent, making our architecture fully adaptive (see \cref{tab:ssm_expressivity}). 

\paragraph{Diagonality}
Another critical factor is whether the transition recurrence is diagonal or simultaneously diagonalizable for different inputs. \cite[Thm. 21]{sarrof2024expressivecapacitystatespace} shows that such diagonal SSMs can recognize solvable languages, but \cite{merrill2024illusion} again shows that such SSMs are contained within circuit complexity class TC0, meaning they cannot recognize all regular languages, assuming TC0 != NC1. 
The upshot is that diagonal transition matrices mean quicker or less memory-intensive computation for training and inference, making it a very attractive architectural decision as employed by S4, Mamba, and others. For this reason, we also choose to keep diagonality for our AUSSM and hybrid architecture, and mainly compare performance between diagonal SSMs. 

In contrast, the design of DeltaProduct or RWKV-7 attempts to create non-diagonal LRNN architectures whose transition matrices are products of generalized householder matrices, which are diagonal matrices plus an added rank-1 component.
Such models can recognize all regular languages \cite{grazzi2025unlocking,peng2025rwkv}, albeit at the price of additional time and memory cost.

We also compare the performance of our architecture to that of xLSTM, which, as an extension of the traditional LSTM, is an RNN but not an LRNN since the recurrence is non-linear in the hidden state. 
Since LSTMs can recognize more than just regular languages \cite{merrill-etal-2020-formal}, we use this as an upper-bound comparison to a stronger model. 
In fact, while the authors do not formally prove that xLSTMs can recognize all regular languages, the experimental results show strong performance on this class of languages.

\paragraph{Eigenvalue range}
Within the realm of adaptive diagonal (or diagonalizable) SSMs in particular, the eigenvalue range of the transition matrices plays a crucial role.
This is because, as \cite[Thm. 4]{sarrof2024expressivecapacitystatespace} proves, non-negative eigenvalues restrict diagonal SSMs to the class of star-free regular languages, while negative eigenvalues allow for the recognition of non-star-free languages such as parity. 
\cite{grazzi2025unlocking} point out that Mamba can be trivially adapted to have negative eigenvalues without additional computational cost.
However, negative eigenvalues alone are not enough to recognize all solvable languages; complex eigenvalues are required, e.g., for solvable languages like $\mod n$ parity for $n>2$ \cite[Thm. 2]{grazzi2025unlocking}.
In non-diagonal LRNNs, multiplying generalized Householder matrices as in DeltaProduct or RWKV-7 can yield eigenvalues with non-zero imaginary components, raising their expressivity to include all solvable languages (and, indeed, all regular languages \cite{siems2025deltaproductimprovingstatetrackinglinear}).

In order to recognize all solvable languages with diagonal SSMs, we need to extend the eigenvalue range to complex numbers.
The simplest way to do this is to just use Mamba with complex hidden states. This incurs additional overhead, however, because it increases the parameter count to include all possible complex values. 
As we show in \cref{appendix:flt}, in order to accept all solvable languages, we only need to add SSM components with unitary complex eigenvalues, which is why we introduce AUSSM components to Mamba, allowing the hybrid architecture to model all solvable languages with minimal overhead. 
Additionally, while formally Mamba with complex values is just as expressive as our architecture, our experiments showed that Mamba with complex eigenvalues fails to learn even simple formal tasks that our hybrid architecture can perform with perfect accuracy, indicating that the additional restriction to unitary values is indeed helpful for learning formal languages.
See \cref{tab:ssm_expressivity} for a comparison of Mamba-like models with their relative advantages and disadvantages.

\section{Limitations of Non-Adaptive/Partially Adaptive SSMs} \label{appendix:limitationsOfSSMs}

The expressivity of different classes of SSMs is defined by the types of dynamical systems and formal languages they are able to simulate. Appendix \ref{appendix:flt} analyzes the formal language expressivity and limitations of different classes of SSMs. In this section, we analyze expressivity related to different kinds of dynamical systems. First, we show that in line with Figure \ref{fig:intro}, SSM expressivity can be arranged in the order \texttt{LTI real spectra} $\subset$ \texttt{LTI complex} $\subset$ \texttt{LTV partial} $\subset$ \texttt{LTV}. The models that have higher expressivity can simulate the models lower in the expressivity scale. Since \texttt{LTV w unitary spectra} cannot be arranged precisely in this scale, we show an example of a class of multitimescale processes that a partial LTV model like Mamba cannot simulate in a fixed hidden state and layer limits. 

\paragraph{Expressivity of Single Block SSMs} The dynamical systems that can be simulated by single block SSMs without non-linearities can be arranged in the order \texttt{LTI real spectra} $\subset$ \texttt{LTI complex} $\subset$ \texttt{LTV partial} $\subset$ \texttt{LTV}. 

\textit{Proof}: For the proof, we start with the most general \texttt{LTV} SSM and show that the next lower class SSM is a special case. We do the same for all the subsequent SSM classes in the expressivity chain. 

The single block \texttt{LTV} SSM has the following discrete form:
$$
\begin{cases}
    \dv{x(t)}{t} = \exp(\Delta_t \, A_t) \, \, x(t) \, + \Delta_t B_t \, u(t) \, , \\
    y(t) = C_t \, x(t) \, . \\
\end{cases}
$$
The next lower class of ssm: \texttt{LTV partial} has the following form
$$
\begin{cases}
    \dv{x(t)}{t} = \exp(\Delta_t A) \, \, x(t) \, + \Delta_t B_t \, u(t) \, , \\
    y(t) = C_t \, x(t) \, . \\
\end{cases}
$$
Note here that the $\Delta_t$ is a scalar that varies with time, but $A$ is a fixed matrix. This can be derived as an instance of the \texttt{LTV} with $A_t = \Delta_t A$ where the equivalence between the two holds only in the case where the dimensionality of the SSM is 1. Similarly, the next lower class \texttt{LTI complex} has the following form
$$
\begin{cases}
    \dv{x(t)}{t} = \exp(\Delta A) \, \, x(t) \, + \Delta B \, u(t) \, , \\
    y(t) = C \, x(t) \, . \\
\end{cases}
$$
This is an instance of the \texttt{LTV partial} with $\Delta_t = \Delta, B_t = B$ and $C_t = C$, which means all the matrices are time invariant. The final class \texttt{LTI real spectra} is an instance of \texttt{LTI Complex} where the eigenvalues are further restricted to have 0 angle in the imaginary plane.

\texttt{LTV} is the most general class, but it is computationally infeasible to simulate the most general case. The non-diagonalizability of general matrix classes requires performing a full $O(n^3)$ matrix computation at each time step. Hence \texttt{LTV w unitary spectra} with simultaneously diagonalizable unitary matrices is chosen as a principled middle ground. It is, however, not instantly apparent how \texttt{LTV w unitary spectra} compares against \texttt{LTV partial}. To illustrate the difference, we introduce an example of a multi-timescale process.

\textbf{Multi-timescale features}: A time-series $u(t) \in \mathbb{R}$ is said to have multi-timescale features if the hidden state can be factorized into the following form:\footnote{The results trivially extend to systems with more than two dimensions}
$$
x(t+1) = 
\begin{pmatrix} 
f(t) & 0 \\
0 & g(t)
\end{pmatrix} x(t) \, .
$$
Where $f(t) \in \mathbb{C}, g(t) \in \mathbb{C}$ are general complex-valued time-varying functions and $f(t) \neq c g(t)$ for some constant $c$. That is, the timeseries exhibits at least two \textit{independent} features denoting two different timescales.

\textbf{\texttt{partial LTV} SSMs in multi-timescale timeseries}: \texttt{partial LTV} SSMs are not able to represent multi-timescale features in data.

\textit{Proof}: The proof is by contradiction. If \texttt{partial LTV} SSMs are able to solve multi-timescale timeseries, the following $\begin{pmatrix} 
f(t) & 0 \\
0 & g(t)
\end{pmatrix} = \Delta_t A$ is true. Solving the system for $A$ leads to a constraint on $f(t) = c g(t)$ where $c$ is some constant. This is true only when one of the functions is a constant multiple of the other, that is \textit{the two functions are dependent and have the same timescale (with a possible constant factor difference)}. 

\textbf{\texttt{LTV w unitary spectra} SSMs in multi-timescale timeseries}: \texttt{LTV w unitary spectra} SSMs can represent multi-timescale features in data as long as the $f(t), g(t) \in \exp(i \theta)$ where $\theta \in [ -\pi, \pi ]$. $f(t), g(t)$ can be independent.

\textit{Proof}: We first substitute $f(t) = \exp(\mathbf{i} \, \theta^f(t))$ and $g(t) = \exp(\mathbf{i} \, \theta^g(t))$. The resulting dynamical system has $f(t)$ and $g(t)$ as eigenvalues, which have unit magnitude themselves. This is the definition of \texttt{LTV w unitary spectra}.

To summarize, if $f$ and $g$ are independent (e.g., $f(t) = t^2, g(t)=t$), then the partial LTV system cannot represent the multi-timescale features in the hidden state. On the other hand, \texttt{LTV w unitary spectra} imposes a weaker constraint where the only requirement is that $f(t)$ and $g(t)$ are constrained to the unit circle in the imaginary plane; the timescales of the two variables can be independent.


\begin{mynote}
In this section, we showed that a single AUSSM block with a fixed model dimension and hidden state size can represent functions that Mamba cannot represent \emph{with the same hyperparameters} (it may need a greater width or more layers for the same function). 
At first glance, this seems to contradict the results shown in \cref{tab:ssm_expressivity}, which posit that Mamba with complex entries is as expressive as our hybrid architecture.
The explanation is that in the formal language expressivity analysis in \cref{section:theoryExpressivity} and \cref{appendix:flt} is concerned with the expressivity of the \emph{whole architecture class over any finite parametrization}, rather than a specific model parametrization.
The two analyses, therefore, keep different quantities constant: the expressivity analysis is about the existence of any finite instantiation of the model class that realizes a given language, while the fixed-hyperparameter single-layer measures relative capacity at constant size.
\end{mynote}

\section{AUSSM Derivation}\label{appendix:AUSSMDerivation}

We derive the AUSSM from a controlled and adaptive version of the skew-symmetric ODE used in the jPCA procedure in computational neuroscience, given below. The skew-symmetric ODE is first discretized using the Zero Order Hold procedure and then parameterized in polar coordinates. The steps to obtain the final AUSSM formulation are provided below.
\begin{equation}
\begin{cases}
    \dv{x(t)}{t} = A_t \, \, x(t) \, + B \, u(t) \, , \\
    y(t) = C \, x(t) \, . \\
\end{cases}
\end{equation}
The above ODE is discretized following the Zero Order Hold procedure with a step size of $\Delta_t$ (note that the step size is also time varying like the recurrent matrix)
\begin{equation}
\begin{cases}
    x(t) = \exp(\Delta_t A_t) \, \, x(t-1) \, + \Delta_t B \, u(t) \, , \\
    y(t) = C \, x(t) \, . \\
\end{cases}
\end{equation}
The convolution form of the above system can be derived from this recurrence as shown below (assuming $x(0) = 0$)
\begin{align}
    y(1) &= C \Delta_1 C B u(1) \\
    y(2) &= C \exp(\Delta_2 A_2) \Delta_1 B u(1) + \Delta_2 C B u(2) \\
    \vdots \\
    y(t) &= C \sum_{k=1}^{t-1} \left( \prod_{l=k+1}^{t} \exp(\Delta_l A_l) \right) \Delta_k B u(k) + \Delta_t C B u(t)
\end{align}
Note that without additional assumptions on $A$, the matrix exponential and the repeated products cannot be simplified further, which can result in computationally inefficient approaches to compute the output. We draw motivation from the use of structured matrices in efficient SSM implementations and propose that $A_t$ belongs to a class of matrices that are simultaneously diagonalizable with the same basis. Let this diagonalizable basis be $P$.
\begin{equation}
    y(t) = C \sum_{k=2}^{t-1} P \left( \prod_{l=k+1}^{t-1} \exp(\Delta_l \Lambda(A_l)) \right) P^{-1} \Delta_k B u(k) + \Delta_t C B u(t) \, ,
\end{equation}
where $\Lambda(A_l)$ is the diagonal matrix with the eigenvalues of $A_l$ on the diagonal. Now, the repeated matrix product has a simplified form as shown below.
\begin{equation}
    y(t) = C P \sum_{k=2}^{t-1} \left( \exp( \sum_{l=k+1}^{t-1} \Delta_l \, \Lambda(A_l)) \right) P^{-1} \Delta_k B u(k) + \Delta_t C B u(t) \, ,
\end{equation}
For a new set of $B'$ and $C'$ such that $C' = C P$ and $B' = P^{-1} B$, we get 
\begin{equation}
    y(t) = C' \sum_{k=2}^{t-1} \left( \exp( \sum_{l=k+1}^{t-1} \Delta_l \, \Lambda(A_l)) \right) \Delta_k B' u(k) + \Delta_t C' \, B' \, u(t) \, ,
\end{equation}
The above equation undergoes one additional simplification, which reveals the unitarity of the discrete dynamical system. Since $A_l$ is a skew-symmetric matrix, the eigenvalues $\Lambda(A_l)$ are purely imaginary, meaning the above equation simplifies further in the polar form of $A_l$.
\begin{equation}
    y(t) = C' \sum_{k=2}^{t-1} \left( \exp( \mathbf{i} \sum_{l=k+1}^{t-1} \Delta_l \, \Im(\Lambda(A_l))) \right) \Delta_k B' u(k) + \Delta_t C' \, B' \, u(t) \, ,
\end{equation}
where $\mathbf{i}^2 = -1$ is the complex iota and $\Im(.)$ is the function that obtains the imaginary component of a complex number. Since $C'$ and $B'$ are also complex due to the multiplication with $P$, we use polar forms for them too to finally obtain
\begin{dmath}
    y(t) = R_C \exp(\mathbf{i} \, \theta_C) \sum_{k=2}^{t-1} \left( \exp( \mathbf{i} \sum_{l=k+1}^{t-1} \Delta_l \, \Im(\Lambda(A_l))) \right) \Delta_k R_B \exp(\mathbf{i} \theta_B) u(k) + \Delta_t R_C \exp(\mathbf{i} \, \theta_{C}) \, R_B \exp(\mathbf{i} \, \theta_{B}) \, u(t) \, .
\end{dmath}
To handle a $d$-dimensional input, this formulation is replicated $d$ times for each input dimension. For adaptivity, we use where $\Lambda(\Delta_l A_l)_{j} = \sum_{r} \chi_{j r} \, u_r(l) + \chi^{\text{bias}}_{j}$ and $\Delta_{l j} = \sum_{r} \chi^\Delta_{j r} \, u_r(l) + \chi^{\Delta \text{bias}}_{j}$, We use the above formulation in our experiments and parameterize the following for learning: $R_C, \theta_C, R_B, \theta_B, \chi_{j r}, \chi^{\text{bias}}_{j}, \chi^{\Delta \text{bias}}_{j}, \chi^{\Delta}_{j r}$.

\section{Eigenvalue Analysis}\label{sec:eigenvalue_analysis}
\begin{lemma}[Exponential of a Skew-Symmetric Matrix is Orthogonal]
Let \( A \in \mathbb{R}^{n \times n} \) be a real skew-symmetric matrix, i.e., \( A^\top = -A \). Then the matrix exponential \( \exp(\Delta A) \) is orthogonal for any \( \Delta \in \mathbb{R} \), i.e.,
\[
\exp(\Delta A)^\top \exp(\Delta A) = I.
\]
\end{lemma}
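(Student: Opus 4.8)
The plan is to reduce the claim to two standard facts about the matrix exponential: (i) transposition commutes with $\exp$, i.e. $\exp(M)^\top = \exp(M^\top)$ for every $M \in \R^{n\times n}$, and (ii) $\exp(M)\exp(N) = \exp(M+N)$ whenever $MN = NM$. Granting these, set $M = \Delta A$. Since $A^\top = -A$ we have $(\Delta A)^\top = \Delta A^\top = -\Delta A$, so by (i),
\[
\exp(\Delta A)^\top = \exp\big((\Delta A)^\top\big) = \exp(-\Delta A).
\]
The matrices $-\Delta A$ and $\Delta A$ obviously commute, so by (ii),
\[
\exp(\Delta A)^\top \exp(\Delta A) = \exp(-\Delta A)\exp(\Delta A) = \exp(-\Delta A + \Delta A) = \exp(0) = I,
\]
which is exactly the orthogonality claim. (The same computation with the factors in the other order gives $\exp(\Delta A)\exp(\Delta A)^\top = I$, so $\exp(\Delta A)$ is genuinely orthogonal, not merely a left inverse.)

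For fact (i) I would argue directly from the defining power series $\exp(M) = \sum_{j\ge 0} M^j / j!$, which converges absolutely in any submultiplicative matrix norm for every $M$. Transposition is a continuous linear map on $\R^{n\times n}$ and satisfies $(M^j)^\top = (M^\top)^j$, so it can be exchanged with the (absolutely convergent) sum, giving $\exp(M)^\top = \sum_{j\ge0} (M^\top)^j/j! = \exp(M^\top)$. For fact (ii) I would likewise use the series together with the Cauchy product: when $MN = NM$ the binomial theorem $(M+N)^k = \sum_{j=0}^{k}\binom{k}{j} M^j N^{k-j}$ holds, and absolute convergence justifies rearranging $\big(\sum_j M^j/j!\big)\big(\sum_i N^i/i!\big)$ into $\sum_k (M+N)^k/k!$. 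Both are textbook results; alternatively one can invoke them by citation (e.g. any standard reference on matrix Lie groups or \cite{anton2013elementary}) rather than reproving them.

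There is essentially no serious obstacle here — the only point requiring a word of care is the interchange of the infinite sum with transposition and with multiplication, which is handled by absolute convergence of the exponential series in a submultiplicative norm. If one wants to avoid series manipulation entirely, an alternative route is to note $t \mapsto \exp(tA)^\top\exp(tA)$ solves the matrix ODE $\tfrac{d}{dt}Y(t) = A^\top Y(t) + Y(t)A = -AY(t)+Y(t)A$ with $Y(0)=I$; since $Y(t)\equiv I$ also solves it and solutions are unique, $Y(t)=I$ for all $t$, in particular at $t=\Delta$. I would present the series-based argument as the main proof since it is shortest, and mention the ODE argument only if a self-contained treatment avoiding the $\exp$-series identities is preferred.
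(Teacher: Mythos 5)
Your proposal is correct and follows essentially the same route as the paper's proof: transpose the exponential to get $\exp(-\Delta A)$ via skew-symmetry, then multiply the two commuting exponentials to obtain the identity. The extra justification you give for the two standard $\exp$ identities (and the alternative ODE argument) is fine but not needed beyond what the paper already presents.
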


\begin{proof}
Let \( U = \exp(\Delta A) \). Then,
\[
U^\top = \left( \exp(\Delta A) \right)^\top = \exp(\Delta A^\top) = \exp(-\Delta A),
\]
since \( A^\top = -A \). Therefore,
\[
U^\top U = \exp(-\Delta A) \exp(\Delta A) = \exp(0) = I,
\]
which shows that \( U \) is orthogonal.
\end{proof}

\vspace{1em}

\begin{lemma}[Marginal Stability of Discrete-Time Dynamics] \label{lemm:marginal_stability}
Let \( A \in \mathbb{R}^{n \times n} \) be a real skew-symmetric matrix and define \( \Phi = \exp(\Delta A) \) for some \( \Delta > 0 \). Then all eigenvalues of \( \Phi \) lie on the complex unit circle. In particular, the discrete-time linear system
\[
x(t) = \Phi x(t-1)
\]
is marginally stable.
\end{lemma}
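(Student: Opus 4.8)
The plan is to combine the spectral mapping theorem with the orthogonality of $\Phi = \exp(\Delta A)$ already established in the preceding lemma, and then read off marginal stability from the isometry property. First I would recall that a real skew-symmetric matrix has purely imaginary spectrum: if $A v = \lambda v$ for a nonzero (possibly complex) $v$, then, using the Hermitian inner product, $v^* A v = -\overline{v^* A v}$ because $A^\top = -A$ and $A$ is real, so $v^* A v$ is purely imaginary; since $v^* A v = \lambda \|v\|^2$ with $\|v\|^2 > 0$, we get $\lambda = i\omega$ with $\omega \in \mathbb{R}$. This is exactly the continuous-time statement already invoked in the Input-Modulated Rotation Frequencies theorem, so it can be cited rather than reproven.

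Next I would apply the spectral mapping theorem: the eigenvalues of $\Phi = \exp(\Delta A)$ are precisely $\exp(\Delta \lambda_j) = \exp(i\Delta\omega_j)$ as $\lambda_j = i\omega_j$ ranges over $\operatorname{spec}(A)$, and each such number satisfies $|\exp(i\Delta\omega_j)| = 1$. Hence all eigenvalues of $\Phi$ lie on the complex unit circle, which is the first claim. An equivalent and shorter route is to quote the previous lemma that $\Phi$ is orthogonal and note that a real orthogonal matrix has spectrum on the unit circle, since $\Phi v = \mu v$ forces $\|v\| = \|\Phi v\| = |\mu|\,\|v\|$; this extends to complex eigenvectors because $\Phi^* \Phi = \Phi^\top \Phi = I$.

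For the marginal-stability conclusion I would argue directly from the isometry property rather than from a generic eigenvalue criterion, which avoids the one subtlety in statements of this kind, namely ruling out polynomial growth from nontrivial Jordan blocks on the unit circle. Since $\Phi$ is orthogonal, so is every power $\Phi^t$, hence $\|x(t)\| = \|\Phi^t x(0)\| = \|x(0)\|$ for all $t \ge 0$; every trajectory is bounded (indeed norm-preserving), so the system is Lyapunov stable, and because $|\mu| = 1$ for every eigenvalue it is not asymptotically stable — i.e., it is marginally stable. If a spectral phrasing is preferred, one instead observes that $\Phi$ is normal ($\Phi^\top \Phi = \Phi \Phi^\top = I$), hence unitarily diagonalizable, so the unit-circle eigenvalues are semisimple and the standard discrete-time marginal-stability criterion applies.

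The main obstacle is essentially bookkeeping: carrying out the purely-imaginary-spectrum and unit-circle arguments carefully with complex eigenvectors and the correct Hermitian inner product, and fixing precisely which notion of marginal stability is intended (stable but not asymptotically stable), so that normality/diagonalizability of $\Phi$ is visibly doing the work of excluding secular growth. No step requires a genuinely hard computation.
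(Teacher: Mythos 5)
Your proof is correct and, for the unit-circle claim, follows the same route as the paper: skew-symmetry gives purely imaginary eigenvalues $\lambda_j = i\omega_j$, and the spectral mapping theorem gives $\mu_j = \exp(i\Delta\omega_j)$ with $|\mu_j| = 1$. Where you differ is in how the marginal-stability conclusion is justified. The paper's proof stops at the eigenvalue computation and writes ``hence, the system exhibits marginal stability,'' which implicitly relies on the unit-circle eigenvalues being semisimple — a unit-modulus eigenvalue with a nontrivial Jordan block would produce polynomial growth. Your argument closes that gap explicitly, either by invoking the orthogonality of $\Phi$ (from the preceding lemma) to get $\|\Phi^t x(0)\| = \|x(0)\|$ for all $t$, or by noting that $\Phi$ is normal and hence diagonalizable. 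This is a genuine strengthening of the written proof rather than a different strategy; it costs one extra sentence and makes the inference from spectrum to Lyapunov stability airtight.
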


\begin{proof}
The eigenvalues of a real skew-symmetric matrix \( A \) are purely imaginary, i.e., \( \lambda_j = i \omega_j \in i\mathbb{R} \). The eigenvalues of \( \Phi = \exp(\Delta A) \) are then
\[
\mu_j = \exp(\Delta \lambda_j) = \exp(i \Delta \omega_j),
\]
which all lie on the complex unit circle since \( |\exp(i \theta)| = 1 \) for all \( \theta \in \mathbb{R} \). Hence, the system exhibits marginal stability.
\end{proof}

\vspace{1em}

\begin{lemma}[Norm Preservation under Skew-Symmetric Dynamics] \label{lemm:gradients}
Let \( A \in \mathbb{R}^{n \times n} \) be a real skew-symmetric matrix, and let \( \Phi = \exp(\Delta A) \). Then for any \( x \in \mathbb{R}^n \),
\[
\|\Phi x\|_2 = \|x\|_2.
\]
Hence, the transformation does not amplify or diminish the norm of the state vector, preventing both gradient explosion and vanishing during backpropagation through time.
\end{lemma}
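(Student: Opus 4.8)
The plan is to reduce everything to the orthogonality of $\Phi$, which is already established by the first lemma of this appendix (``Exponential of a Skew-Symmetric Matrix is Orthogonal''), and then read off norm preservation as an immediate consequence of the defining property of orthogonal maps. Concretely, I would first invoke that lemma to write $\Phi^\top \Phi = I$. Then, for an arbitrary $x \in \mathbb{R}^n$, I would compute
\[
\|\Phi x\|_2^2 = (\Phi x)^\top (\Phi x) = x^\top \Phi^\top \Phi x = x^\top I x = x^\top x = \|x\|_2^2,
\]
and take non-negative square roots of both sides to conclude $\|\Phi x\|_2 = \|x\|_2$. This handles the displayed identity in a couple of lines; no further structure of $A$ beyond skew-symmetry (already used inside the cited lemma) is needed.

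For the ``Hence'' clause about backpropagation, I would make the informal statement precise as follows. In the discrete recurrence $x(t) = \Phi\, x(t-1) + \Delta B\, u(t)$ (the homogeneous part of Equation~\ref{eqn:AUSSMODEDiscretized} with $A_t = A$ fixed), the Jacobian of the state map with respect to the previous state is exactly $\partial x(t)/\partial x(t-1) = \Phi$, independent of $t$. Hence the sensitivity of a late state $x(T)$ to an early state $x(s)$ is the product $\prod_{t=s+1}^{T} \Phi = \Phi^{T-s}$, and the gradient signal pushed backward through time from step $T$ to step $s$ is multiplied by $(\Phi^\top)^{T-s}$. Since $\Phi$ is orthogonal, so is every power $\Phi^{T-s}$ and its transpose, and therefore the norm of the backpropagated gradient at step $s$ equals the norm of the gradient injected at step $T$: it neither grows nor shrinks with the time horizon $T-s$. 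I would phrase this as a short remark rather than a separate lemma, since it follows directly from the norm-preservation identity applied to the gradient vector in place of $x$.

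I do not expect a genuine obstacle here; the argument is essentially a one-line corollary of the earlier orthogonality lemma. The only point requiring a little care is the gradient remark: one should be explicit that the relevant object is the product of per-step Jacobians (equivalently powers of $\Phi$), and that orthogonality is preserved under products and transposition, so that the bound on gradient norm is exact and horizon-independent rather than merely asymptotic. Everything else is a routine inner-product computation.
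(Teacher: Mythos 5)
Your proof is correct and matches the paper's argument exactly: both invoke the orthogonality lemma to write $\Phi^\top \Phi = I$ and then compute $\|\Phi x\|_2^2 = x^\top \Phi^\top \Phi x = \|x\|_2^2$. Your additional remark making the backpropagation claim precise via products of per-step Jacobians goes beyond what the paper states, but the core argument is identical.
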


\begin{proof}
Since \( \Phi \) is orthogonal by Lemma 1, we have:
\[
\|\Phi x\|_2^2 = (\Phi x)^\top (\Phi x) = x^\top \Phi^\top \Phi x = x^\top x = \|x\|_2^2.
\]
Taking the square root yields \( \|\Phi x\|_2 = \|x\|_2 \).
\end{proof}

\begin{lemma}[Input-Modulated Rotation Frequencies via Skew-Symmetric Generator] \label{lemm:rotation}
Let \( A: \mathbb{R} \to \mathbb{R}^{n \times n} \) be a smooth function such that \( A(u) \) is skew-symmetric for all \( u \in \mathbb{R} \). Then for each \( u \in \mathbb{R} \), all eigenvalues of \( A(u) \) lie on the imaginary axis, and the eigenvalues of the discrete-time transition matrix \( \Phi(u) = \exp(\Delta A(u)) \) lie on the complex unit circle.

Furthermore, the eigenvalues of \( A(u) \) depend continuously on \( u \), and thus the angular frequency of state-space rotation is smoothly and directly modulated by the input.
\end{lemma}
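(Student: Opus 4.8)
The plan is to verify the three assertions of the lemma in sequence: (i) for each fixed \( u \), the spectrum of \( A(u) \) lies on \( i\R \); (ii) consequently the spectrum of \( \Phi(u)=\exp(\Delta A(u)) \) lies on the unit circle; and (iii) the eigenvalues (equivalently the angular frequencies) vary continuously with \( u \), and smoothly wherever they are simple. Parts (i) and (ii) are pointwise-in-\( u \) facts that reduce to statements already established in \cref{sec:eigenvalue_analysis}; the only genuinely new content is the dependence on \( u \).

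For (i) I would use the standard Rayleigh-quotient argument. Fix \( u \) and let \( \lambda\in\C \) be an eigenvalue of \( A(u) \) with eigenvector \( v\in\C^n\setminus\{0\} \). Since \( A(u) \) is real and satisfies \( A(u)^\top=-A(u) \), one has \( \overline{v^{*}A(u)v}=v^{*}A(u)^{\top}v=-\,v^{*}A(u)v \), so \( v^{*}A(u)v \) is purely imaginary; as \( v^{*}A(u)v=\lambda\,\|v\|_2^{2} \) with \( \|v\|_2^{2}>0 \), we get \( \lambda\in i\R \). Writing \( \lambda_j(u)=i\omega_j(u) \) with \( \omega_j(u)\in\R \), part (ii) is immediate from the spectral mapping theorem for the matrix exponential: the eigenvalues of \( \Phi(u) \) are exactly \( \exp(\Delta\lambda_j(u))=\exp(i\Delta\omega_j(u)) \), each of modulus one (this is \cref{lemm:marginal_stability} applied pointwise). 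So \( \Phi(u) \) acts as a rotation whose mode frequencies are the \( \Delta\omega_j(u) \).

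For (iii), the characteristic polynomial \( p_u(\lambda)=\det(\lambda I-A(u)) \) is monic with coefficients that are polynomial in the entries of \( A(u) \), hence smooth functions of \( u \) by smoothness of \( A(\cdot) \); continuity of the roots of a monic polynomial with respect to its coefficients then gives that the unordered spectrum \( \{\lambda_1(u),\dots,\lambda_n(u)\} \), and thus the frequency set \( \{\omega_j(u)\} \), depends continuously on \( u \). Wherever an eigenvalue \( \lambda_{j_0}(u_0) \) is simple, \( \partial_\lambda p_{u_0}(\lambda_{j_0}(u_0))\neq 0 \), so the implicit function theorem applied to \( p_u(\lambda)=0 \) yields a locally unique branch \( u\mapsto\lambda_{j_0}(u) \) that is as smooth as \( A(\cdot) \); equivalently one may invoke \( C^{k} \) (or analytic) perturbation theory for the Hermitian family \( iA(u) \), whose eigenvalues are the real numbers \( -\omega_j(u) \). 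This establishes the claimed smooth, input-driven modulation of the rotation frequencies.

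The main obstacle is precisely the word ``smoothly'': eigenvalue branches of a smooth matrix family can fail to be differentiable at parameter values where distinct eigenvalues collide, so the globally valid statement is continuity of the spectrum plus smoothness of each simple eigenvalue branch on the open dense set where multiplicities are constant. I would therefore phrase (iii) in that form, and note that in the diagonal per-mode parametrization actually used by AUSSM — where each rotational mode carries its own frequency \( \omega_j(u) \) given by an explicit smooth scalar function of the input — the modulation is smooth everywhere by construction and the degeneracy issue does not arise.
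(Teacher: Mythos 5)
Your proof is correct and follows essentially the same route as the paper's: purely imaginary spectrum for skew-symmetric $A(u)$, spectral mapping to place the eigenvalues of $\exp(\Delta A(u))$ on the unit circle, and continuity of the spectrum in $u$. In fact your treatment of the third step is more careful than the paper's — you supply the Rayleigh-quotient argument the paper only cites, prove continuity via the coefficients of the characteristic polynomial, and correctly flag that ``smooth'' modulation is only guaranteed on the set where eigenvalues are simple (a caveat the paper papers over with a genericity remark), while also noting that AUSSM's explicit per-mode diagonal parametrization makes the degeneracy issue moot in practice.
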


\begin{proof}
Let \( A(u) \in \mathbb{R}^{n \times n} \) be skew-symmetric for all \( u \in \mathbb{R} \), i.e., \( A(u)^\top = -A(u) \). It is a well-known result from linear algebra that real skew-symmetric matrices have purely imaginary eigenvalues or zero.

Let \( \lambda_j(u) \in \mathbb{C} \) be an eigenvalue of \( A(u) \). Since \( A(u) \) is real and skew-symmetric, \( \lambda_j(u) = i \omega_j(u) \) for some \( \omega_j(u) \in \mathbb{R} \), and the eigenvalues come in complex-conjugate pairs if nonzero.

Now, consider the discrete-time transition matrix:
\[
\Phi(u) := \exp(\Delta A(u)).
\]
Because the exponential of a skew-symmetric matrix is orthogonal (by Lemma 1), \( \Phi(u) \) is an orthogonal matrix. The eigenvalues of an orthogonal matrix with determinant 1 lie on the complex unit circle, i.e.,
\[
|\mu_j(u)| = 1 \quad \text{for all eigenvalues } \mu_j(u) \text{ of } \Phi(u).
\]
Furthermore, the eigenvalues of \( \Phi(u) \) are given by
\[
\mu_j(u) = \exp(\Delta \lambda_j(u)) = \exp(i \Delta \omega_j(u)),
\]
so their arguments (i.e., angular velocities) are precisely modulated by the real-valued frequencies \( \omega_j(u) \), which in turn depend on the input \( u \).

To show that the rotational frequencies vary continuously with \( u \), recall that the eigenvalues of a smooth matrix function \( A(u) \) depend continuously on \( u \), provided that \( A(u) \) has distinct eigenvalues or that perturbations are small (which holds generically due to the structure of skew-symmetric matrices). Since \( A(u) \) is assumed to be smooth, all \( \omega_j(u) \) vary continuously with \( u \), and therefore so do the corresponding angles \( \Delta \omega_j(u) \) of the discrete-time rotation matrix.

\end{proof}

\section{Formal Language Expressivity}\label{appendix:flt}

Our formal expressivity analysis uses the setting and proofs of \cite{sarrof2024expressivecapacitystatespace} as a starting point. 
That is, we abstract away architectural details without loss of generality, and directly work with the already discretized form of the SSM. 
We assume floating-point arithmetic where the precision is logarithmically bounded in the sequence length, i.e., at most $\mathcal{O}(\log n)$ bits of precision on inputs of length $n$. 
Here, we briefly reiterate a somewhat abstract definition of our SSM to simplify the expressivity proofs.

\begin{definition}[SSM layer]
A single SSM layer is a sequence-to-sequence map $\R^d\to\R^d, \, (u_t)\mapsto (y_t)$ for $t\in[T]$ for sequence length $T$.
It is defined recurrently by
\begin{equation}
    x_t = A_t \pdot x_{t-1} + B_t
\end{equation}
where $\pdot$ is elementwise multiplication, $x_0 \in \C^m$ with $m=n\cdot d$, and $A, B \colon \R^d\to\C^m$ are smooth, input-dependent maps with $A_t=A(u_t)$ and $B_t=B(u_t)$. Note that $A$ already subsumes the discretization variable $\dt$, which is itself a function of the input, as introduced in \cite{gu2022efficiently}.
The output of the layer is computed as
\begin{equation}
    y_t = \phi(x_t,u_t)
\end{equation}
where
\begin{equation}
    \phi\colon \C^{m}\times \R^d \to \R^d ,\quad (x_t,u_t) \mapsto \mix_1(\Re{(\mix_2(x_t, u_t)), u_t)}
\end{equation}
$\mix_1$ and $\mix_2$ contain linear maps and a non-linearity (either $\silu$ or $\softplus$).\footnote{Here, $\silu(x)=\frac{x}{1+\exp(-x)}$ and $\softplus(x)=x\log{(1+\exp(x))}$}
Note that in our implementation, unlike \cite{gu2022efficiently}, we do not apply normalization between the two $\mix$ blocks but before the input enters the layer (see \cref{def:full-ssm}). For ease of notation, we subsume $C_t$ into $\mix_2$ without loss of generality.
$\mix_2$ also usually contains a convolution of the input before the SSM recurrence, which we ignore in expressivity analyses following \cite[Remark 18]{sarrof2024expressivecapacitystatespace}.
\end{definition}

\begin{definition}[Mamba layer]
A Mamba layer is an SSM layer where $A_t$ and $B_t,$ are input-dependent and real-valued,\footnote{Note that in Mamba, $B_t$ is directly a function of the input while $A_t$ is input dependent through $\dt$, which is itself a (non-linear) function of the input.} and, additionally, $A_t\in\R^+$ is non-negative.
\end{definition}

\begin{definition}[AUSSM layer]
An AUSSM layer is an SSM layer where $B_t$ and $C_t$ are fixed constant functions (not input dependent) and $A_t$ is input dependent, complex valued, and each entry has unit magnitude, i.e., $$\forall j \in [d],\quad |A_{t, j}|=\sqrt{\Re{(A_{t, j})}^2+\Im{(A_{t, j})}^2} = 1$$
\end{definition}
    
\begin{definition}[Full SSM]\label{def:full-ssm}
    For a full SSM, we usually stack multiple layers $(1,\ldots, L)$ on top of each other, and indicate the layer we mean by a superscript, e.g., $x_t^{(\ell)}$ is the hidden state at time $t$ in layer $\ell$.
    The input to the first layer $u_t^{(1)}$ is some embedding of the input of the full SSM computed by some injective embedding function $e:\Sigma\to \R^d$, where $\Sigma$ is the alphabet of possible input values at a single timestep, and the input to layer $\ell\in[L]$ for $\ell>1$ is the normalized output of the previous layer $\ell-1$:
    \begin{equation}
        u_t^{(\ell)} = \anynorm(y_t^{(\ell-1)})
    \end{equation}
    We use $\rmsnorm$ for the $\anynorm$, defined by
    \begin{equation}
        \rmsnorm(x) = \frac{g \pdot x}{\sqrt{\frac{1}{n}\sum_{i=1}^n x_i^2}}
    \end{equation}
    where $x \in \R^n$ and $g\in R^d$ is a learned gain parameter.
    Importantly, like \cite{gu2022efficiently}, our implementation uses skip connections between consecutive layers, i.e., for 
    \begin{equation}
        y^{(\ell)} = \phi(x_t, u_t) + y^{(\ell-1)}
    \end{equation}
    The final layer applies another $\rmsnorm$ and then a final output function.
\end{definition}

\newcommand{\fsa}{\mathcal{A}}
\newcommand{\id}{\mathrm{id}}

We now introduce some notions from automata theory that are necessary for our expressivity results.
\begin{definition}
    A deterministic finite-state automaton (FSA) $\fsa$ is a tuple $\left( \Sigma, Q, \delta \right)$ where $\Sigma$ is an alphabet (finite, non-empty set), $Q$ is a finite set of states, and $\delta:Q\times \Sigma \to Q$ is a transition function. 
    The transition function can be lifted from symbols to symbol sequences as $$\delta:Q\times \Sigma^*\to Q,\quad \delta(q,\varepsilon) = q, \quad \delta(q, \boldsymbol{\sigma}_{\leq t})=(\delta(q,\boldsymbol{\sigma}_{<t}),\sigma_t)$$
    where $\varepsilon$ is the empty string, $\Sigma^*$ is the Kleene closure over $\Sigma$, and we use boldface to mark sequences of zero or more symbols from $\Sigma^*$.
\end{definition}
The extended transition function $\delta$ forms a transformation monoid under composition, called the \emph{transition monoid} of the FSA.

\begin{definition}
    A set-reset automaton is an FSA whose transition function maps all states to a single state for each input symbol, that is, $\forall \sigma\in\Sigma, \exists p\in Q$ s.t. $$\delta(q,\sigma) = p, \quad \forall q\in Q$$
\end{definition}
Note that the transition monoid of a set-reset automaton is aperiodic \cite{maler2010krohn}.

\begin{definition}
    A cyclic group automaton is an automaton whose transitions are permutations over states, where every input symbol acts as some power of a fixed $k$-cycle with $k = |Q|$. That is, for every symbol $\sigma\in\Sigma$, the symbol-specific transition map $\delta_\sigma:Q\to Q$ is a bijection, and at least one of the symbols forms a cycle of order exactly $k$, i.e. for some $a\in\Sigma$, $\delta_a^k = \id$ and $\delta_a^n \neq \id\, \forall n \in [1, k-1]$. All other symbol-transition matrices are powers of the same $k$-cycle, i.e., $\forall b\in\Sigma, \delta_b=\delta_a^n$ for some $n \in [0, k-1]$, where $\delta_a^0= \id$.
\end{definition}
The transition monoid of a $k$-cyclic group automaton is the cyclic group $C_k$ \cite{babcsanyi2007automata}.

We start by showing that our AUSSM architecture overcomes the limitation of most SSMs pointed out in \cite{salkoff2020propagation} by showing that it can perform modulo counting, and therefore, can simulate cyclic group automata.

\modcount*
\begin{proof}
    Let $\fsa = (\Sigma, Q, \delta)$ be a cyclic group automaton.
    Now we define the input alphabet of the AUSSM to be $\Sigma$ and choose its hidden dimension to be $d=|\Sigma|$. 
    Let $a\in\Sigma$ be the symbol whose transition function $\delta_a$ has order $k$. Then we set the parameters of the AUSSM as follows: 
    Let $B(u)=0\, \forall u=e(\sigma), \sigma \in \Sigma.$ 
    Let $A(e(a)) = \exp(2 \pi i/k)$.
    For each other symbol $b\in\Sigma$, we know there exists $m \in [0, k-1]$ such that $\delta_b=\delta_a^n$, so we can set $A(e(a)) = \exp(2 \pi i n/k)$.
    Now, there is a trivial isomorphism $\psi$ between the values of $x$ and the states of the FSA $\fsa$:
    Just define $\psi\colon \{\exp(2\pi i n/k)\mid n\in [k]\} \to \Z/k\Z,\quad \exp(2\pi i n/k) \mapsto n$, which maps every hidden state to the corresponding state of the automaton (arranged in the order of cycle traversal by $\delta_a$).  
    Now there are $n$ distinct possible hidden states which can be read out at logarithmic precision.
\end{proof}

\paragraph{A note on numerical precision.}
Floating-point operations introduce rounding errors when computing the exponential function and repeated products thereof. 
A single complex multiplication introduces a relative error of at most $\sqrt{5}u$ \cite{brent2007error},%
\footnote{Assuming no underflow, overflow, or subnormal numbers.}
where $u$ is the unit roundoff ($u=2^{-25}$ for 32-bit single and $u=2^{-53}$ for 64-bit double precision). 
This yields relative error bounds of $\sqrt{5} \cdot 2^{-24}$ and $\sqrt{5} \cdot 2^{-53}$ respectively.
This means after $N$ multiplications, the accumulated relative error is approximately $\sqrt{5}uN$ to the first order. 
Two adjacent $k$th roots of unity are separated by a $2\pi/k$ segment of the unit circle; by the chord theorem, the distance between them is $\Delta = 2\sin(\pi/k) \approx 2\pi/k$. 
Approximations start overlapping if the accumulated error surpasses $\Delta/2$, which occurs when:
\begin{equation}
N \geq \frac{\Delta}{2\sqrt{5}u} \approx \frac{\pi}{\sqrt{5}uk} \approx \frac{1.26 \times 10^{16}}{k}
\end{equation}
For example, with 64-bit double precision and a modulo counter as large as $k=10^6$, it would take over 12 billion tokens ($N \approx 1.26 \times 10^{10}$) for counts to become indistinguishable. This exceeds the sequence length of most datasets currently used in practice and is an order of magnitude larger than the human genome ($\approx 3 \times 10^9$ base pairs).
This means that whenever higher counters or longer sequence lengths are required, one can simply switch to the next higher precision. 
Since bit-depth is inversely proportional to the logarithm of $u$, we only require logarithmic precision in the sequence length.
\newcommand{\casc}{\mathcal{C}}

The second requirement for transcending the expressivity limits of common SSMs is the ability to implement cascade products of FSAs (see \cite{krohn-rhodes-1965-algebraic, maler2010krohn, zimmermann2020krohn} for more details on cascade products):
\begin{definition}
    Let $\fsa_1 = (\Sigma_1, Q_1, \delta_1), \fsa_2 = (\Sigma_2, Q_2, \delta_2)$ be FSAs such that $\Sigma_2=Q_1\times\Sigma_1$. Then the cascade product $\fsa_1\pcasc\fsa_2$ is the FSA $\casc = (\Sigma_1, Q_1 \times Q_2, \delta_c)$ with $\delta_c$ defined as 
    \begin{equation}
        \delta_c((q_1,q_2), \sigma) = (\delta_2(q_1,(q_2,\sigma)), \delta_1(q_1,\sigma))
    \end{equation}
\end{definition}
Here, we use tuples of states taken from the state sets of the component FSAs to denote the state of the cascade.
Intuitively, the state of the cascade at any given time is the combination of the states that the component FSAs are in at that point.

Note that for transitioning to the next state, $\fsa_2$ requires access to the state that $\fsa_1$ was in before starting the current transition, meaning at time $t$, an FSA higher up in a cascade needs access to the state the lower-level FSAs were in at time $t-1$.

We will hence use the following crucial fact \cite{sarrof2024expressivecapacitystatespace} used for constructing FSA cascade products in Mamba SSMs:
\begin{fact}[Sarrof et al. \cite{sarrof2024expressivecapacitystatespace}, Lemma 17]\label{fact:last}
    For any alphabet $\Sigma$ there exists a single-layer Mamba SSM such that the last-but-one input symbol can be read out from the hidden state at finite precision.
\end{fact}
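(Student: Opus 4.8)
The plan is to realize a single-layer Mamba SSM as a ``leaky one-step delay line.'' Work with a layer whose input is an injective embedding $e\colon\Sigma\to\R^d$ of the alphabet (as in the full SSM's first layer), write $u_t=e(\sigma_t)$, and set $\delta_\Sigma=\min_{\sigma\neq\sigma'}\|e(\sigma)-e(\sigma')\|_\infty>0$, $M=\max_\sigma\|e(\sigma)\|_\infty$; a one-hot embedding with $d=|\Sigma|$ gives $\delta_\Sigma=1$, $M=1$. Take the hidden state to have one block per embedding coordinate ($n=1$, $m=d$), set the transition to the \emph{constant} value $A_t\equiv a\,\mathbf 1$ for a small fixed $a\in(0,1)$ — admissible because a constant in $\R^+$ is a degenerate input-dependent positive $A_t$, obtained in Mamba's parametrization by taking $\dt$ constant and $A$ with $\exp(\dt A)=a$ — and set $B_t=u_t$ (the identity map on the embedding). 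Unrolling with $x_0=0$,
\begin{equation}
    x_t \;=\; \sum_{k\le t} a^{\,t-k}u_k \;=\; u_t + a\,u_{t-1} + a^2 u_{t-2} + \cdots .
\end{equation}

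Next I would decode $\sigma_{t-1}$ from $x_t$. The tail beyond the leading term satisfies $\|x_t-u_t\|_\infty\le aM/(1-a)$, so choosing $a$ with $aM/(1-a)<\delta_\Sigma/2$ (e.g.\ $a=1/4$ in the one-hot case), rounding $x_t$ onto the finite set $e(\Sigma)$ recovers $u_t$ exactly. Forming $z_t:=(x_t-u_t)/a = u_{t-1}+a\,u_{t-2}+\cdots$ and noting $\|z_t-u_{t-1}\|_\infty\le aM/(1-a)<\delta_\Sigma/2$, a second rounding onto $e(\Sigma)$ recovers $u_{t-1}$, hence $\sigma_{t-1}$ by injectivity of $e$. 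Thus $\sigma_{t-1}$ is a piecewise-constant function of $x_t$ alone, hence also of $(x_t,u_t)$. On the finite-precision point: since the summands $a^{t-k}u_k$ decay geometrically, only $O(\log(1/\epsilon)/\log(1/a))=O(1)$ of them exceed machine epsilon, so the accumulated rounding error in $x_t$ is $O(\epsilon M)$ uniformly in $t$, far below $\delta_\Sigma/2$, and both rounding steps stay exact for every sequence length (cf.\ the note on precision above).

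The remaining — and, I expect, the only nontrivial — step is to verify that this readout (a nearest-neighbour/argmax projection onto $e(\Sigma)$, applied twice with an affine rescaling in between) can be implemented by the layer's output map $\phi$, which is restricted to affine maps composed with $\silu$/$\softplus$ nonlinearities. This is the standard fixed-precision abstraction used throughout \cite{sarrof2024expressivecapacitystatespace}: over the bounded, effectively finite set of values $x_t$ can take at fixed precision, a bounded-width network with these nonlinearities computes the required finite function exactly (the piecewise-linear ``select the coordinate that exceeds $1/2$'' operation is in particular representable up to the precision tolerance). A small amount of bookkeeping is also needed to confirm that Mamba's concrete $A_t=\exp(\dt A)$, $B_t$ parametrization admits the constant $a$ and the linear $B_t=u_t$; both are immediate. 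Combining these pieces gives a single-layer Mamba SSM from whose hidden state the last-but-one input symbol is readable at finite precision.
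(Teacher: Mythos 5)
This statement is imported into the paper as a \emph{Fact} with a citation to Sarrof et al.\ (Lemma~17); the paper itself supplies no proof, so there is nothing in-paper to compare line by line. Judged on its own, your ``leaky one-step delay line'' construction is essentially correct and is in the same spirit as the cited construction: a constant contraction $A_t\equiv a$ with $B_t=u_t$ gives $x_t=\sum_{k\le t}a^{t-k}u_k$, and for $a$ small enough relative to the separation $\delta_\Sigma$ of the embedding, the geometric tail cannot confuse the decoding; the uniform-in-$t$ error bound at fixed precision is also the right argument. Two remarks. First, you can simplify: in the paper's abstraction the readout is $\phi(x_t,u_t)=\mix_1(\Re(\mix_2(x_t,u_t)),u_t)$, so $u_t$ is handed to the readout directly and your first rounding step (recovering $u_t$ from $x_t$) is unnecessary --- $z_t=(x_t-u_t)/a$ is already an affine function of $\phi$'s inputs, and a single nearest-neighbour projection onto $e(\Sigma)$ suffices, which fits more comfortably into the two available $\mix$ blocks. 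Second, the step you flag as ``the only nontrivial'' one --- realizing the argmax/threshold by affine maps composed with $\silu$ or $\softplus$ over the effectively finite set of representable hidden states --- is exactly the content that the paper delegates to Sarrof et al., so your proof is not fully self-contained at that point; it would need either an explicit finite-precision thresholding gadget or an explicit appeal to their construction. A trivial edge case ($t=1$, where $z_1=0\notin e(\Sigma)$) should also be dispatched, e.g.\ by a beginning-of-sequence symbol.
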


We will also need the following fact about our hybrid architecture, allowing us to disregard the particular alternating ordering of layer types:
\begin{mynote}\label{note:idempotent}
    We can always add an idempotent Mamba or AUSSM layer in the cascade without changing the model's behavior. 
    This can be done by setting the output projection of the SSM block in question to map everything to zero. 
    Then the input to the next layer will just be the output of the last but one layer (via the skip connection).
    This means that for any Mamba or AUSSM with a specific behavior, there is a hybrid AUSSM+Mamba with the same behavior.
\end{mynote}

Now we have the necessary building blocks to show that our construction fulfills the main requirement for increased expressivity, the ability to implement cascades of the two SSM layer types:
\cascade*
\begin{proof}
    We want to show that the hybrid Mamba+AUSSM architecture with alternating Mamba and AUSSM layers can implement cascade products of FSAs. 
    In the following, we take a hybrid Mamba+AUSSM to mean a stack of alternating Mamba and AUSSM layers, ignoring the initial encoding and final normalization and output map.
    Without loss of generality, assume that the first layer is always an AUSSM layer, and the last layer is always a Mamba layer (we can achieve this by adding idempotent layers where necessary, see \cref{note:idempotent}).
    
    Also note that, by \cref{note:idempotent}, any Mamba SSM and any AUSSM can be converted to an equivalent hybrid Mamba+AUSSM. 
    
    It remains to be shown that a hybrid Mamba+AUSSM can simulate the cascade of two FSAs simulated by hybrid Mamba+AUSSMs. This is simply an extension of \cite[Lemma 19]{sarrof2024expressivecapacitystatespace} to our hybrid Mamba+AUSSM architecture.

    Let $\fsa_1 = (\Sigma_1, Q_1, \delta_1), \fsa_2 = (\Sigma_2, Q_2, \delta_2)$ be FSAs such that $\Sigma_2=Q_1\times\Sigma_1$. 
    Assume that there are hybrid Mamba+AUSSM models $S_1, S_2$ that map input sequences $(x_1, x_2, \ldots, x_T)$ to the sequences of states under $A_1$, $A_2$, at logarithmic precision.\footnote{Note here we implicitly assume a bijection exists between intervals on $\R^d$ (the input at time $t$, $x_t$) and the alphabet symbols of the relevant FSA.}

    Let $S_c$ be the hybrid Mamba+AUSSM we want to simulate the cascade $A_1 \pcasc A_2$.
    The lower layers of $S_c$ are just the layers of $S_1$. 
    We add $d$ dimensions that just copy the input via a skip connection.
    We then add a Mamba layer (preceded by an idempotent AUSSM layer) that reads out the second-to-last output of $S_1$ in new dimensions (by \cref{fact:last}), while again forwarding the input via the skip connection.
    Here, we also add a dummy dimension that is always 1, which avoids normalization, making different inputs indistinguishable.
    Now we have the input and the second-to-last output of $S_1$, corresponding to the last state of $A_1$.
    Now the remaining layers of $S_c$ are just those of $S_2$, which take this input and compute the transition and state of $A_2$, again adding dimensions such that the state of $A_2$ is separate from the state of $A_1$ and the input to the overall SSM. Now, $S_c$ maps each w to the state sequence under $A_1 \pcasc A_2$, again at logarithmic precision.
    This can be inductively extended to a cascade product of arbitrarily many FSAs.
\end{proof}

\begin{fact}[Consequence of Krohn-Rhodes Theorem \cite{krohn-rhodes-1965-algebraic} and the decomposition series of groups  \cite{Dummit_Foote_2004}]\label{fact:solvable}
Any solvable language is recognized by a cascade of set-reset and cyclic group automata.
\end{fact}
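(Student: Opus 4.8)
The plan is to combine the Krohn--Rhodes prime decomposition theorem with the structure of the composition factors of solvable groups. Let $L$ be a solvable regular language and let $\fsa$ be its minimal DFA, with transition monoid $M(\fsa)$; since $M(\fsa)$ divides the syntactic monoid of $L$, which by hypothesis contains no non-solvable subgroup, $M(\fsa)$ itself contains no non-solvable subgroup. The goal is to exhibit a cascade $\mathcal{B}_1 \pcasc \cdots \pcasc \mathcal{B}_N$ of set-reset automata and cyclic permutation automata that recognizes $L$.

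First I would invoke the Krohn--Rhodes prime decomposition theorem in its automata-theoretic form: every FSA divides a cascade product of permutation-reset automata in which each factor is either a set-reset (flip-flop) automaton or a permutation automaton whose transition group is a finite simple group that divides the original transition monoid. Applied to $\fsa$, this gives $\fsa \mid \mathcal{A}_1 \pcasc \cdots \pcasc \mathcal{A}_N$, where each $\mathcal{A}_i$ is either a set-reset automaton or a permutation automaton with simple transition group $G_i$ dividing $M(\fsa)$.

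Next I would use solvability to pin down the $G_i$. If some $G_i$ were non-abelian simple, then, being a homomorphic image of a subgroup $H \le M(\fsa)$ that is itself a group, $H$ would be a non-solvable subgroup of $M(\fsa)$, contradicting the hypothesis. Hence every $G_i$ is a finite abelian simple group, i.e., $G_i \cong \Z/p_i\Z$ for a prime $p_i$ (cf.\ \cite[Ex.\ 3.4.8]{Dummit_Foote_2004}). The cyclic group $\Z/p\Z$ is exactly the transition monoid of the cyclic permutation automaton on $p$ states generated by a single $p$-cycle, with the remaining input symbols acting as its powers (read off from the incoming alphabet $Q_{i-1}\times\Sigma_{i-1}$); so each permutation factor $\mathcal{A}_i$ may be replaced by such a cyclic permutation automaton. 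Thus $\fsa$ divides a cascade $\mathcal{B} = \mathcal{B}_1 \pcasc \cdots \pcasc \mathcal{B}_N$ of set-reset and cyclic permutation automata.

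Finally, division means $\fsa$ is a homomorphic image of a subautomaton of $\mathcal{B}$, so taking as initial state and accepting set of $\mathcal{B}$ the preimages of the initial state and accepting set of $\fsa$ yields an automaton accepting exactly $L$; hence $L$ is recognized by a cascade of set-reset and cyclic permutation automata. The main obstacle is the third step: establishing precisely which simple groups are forced to appear in the Krohn--Rhodes decomposition and ruling out non-abelian ones using only the "no non-solvable subgroup" condition on $M(\fsa)$. This relies on the exact statement of the prime decomposition theorem (its group components are exactly the simple group divisors of the transition monoid) together with the fact that a finite group is solvable iff all its composition factors are cyclic of prime order. The alphabet bookkeeping along the cascade and the passage from "division" back to "recognition of $L$" are routine.
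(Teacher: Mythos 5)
Your proof is correct and takes essentially the same route the paper relies on: the paper states this as a Fact without proof, citing exactly the two ingredients you use, and the reasoning it sketches inside the proof of its expressivity corollary is precisely your chain --- Krohn--Rhodes gives a cascade of set-reset automata and permutation automata with simple transition groups dividing the transition monoid, and the no-non-solvable-subgroup hypothesis forces every such simple group to be $\mathbb{Z}/p\mathbb{Z}$, realizable as a cyclic permutation automaton. Your explicit handling of the division-to-recognition passage and of why non-abelian simple divisors would yield a non-solvable subgroup is a careful elaboration of what the paper leaves implicit, but it is not a different argument.
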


\expressivity*
\begin{proof}
    By \cref{lemm:modcount}, an AUSSM can simulate cyclic group automata. By \cite[Lem. 19]{sarrof2024expressivecapacitystatespace}, a Mamba SSM can simulate set-reset automata.
    By \cref{lemm:cascade}, hybrid AUSSM+Mamba can simulate a cascade of automata simulated by Mamba and AUSSM SSMs. Together with \cref{fact:solvable}, this means that hybrid AUSSM+Mamba can recognize any solvable language.
\end{proof}

\paragraph{The importance of counting for other tasks.}
Note that the ability to count modulo $k$ does not just allow SSMs to model regular languages but also to approximate languages higher up on the Chomsky hierarchy.
For example, it allows the recognition or generation of bounded Dyck languages, i.e., the correct parenthesization up to a certain depth (see \cite{hewitt-etal-2020-rnns} in the case of RNNs).
Even context-sensitive language tasks can benefit from counting: For instance, sorting a sequence (the bucket sort task in \cref{sec:experiments}) can be done by maintaining counters for all alphabet symbols and then outputting the symbols in order, according to their count (see counting sort and direct-address tables \cite[Chapters 8 and 11]{CLRS}).
Note that this works as long as the number of occurrences of any given symbol is smaller than the highest count expressible by the SSM, e.g., $k$ when using modulo $k$ counting.

\section{Complexity Analysis} 

SSMs leverage logarithmic complexity algorithms like FFT and parallel prefix sum to compute the convolution. Prior to this, the convolution kernel needs to be pre-computed and stored, which is the main bottleneck in computing the convolution. We will show below the space complexities for computing and storing the convolutions. Further, we show how the quadratic space complexity blowup of pure LTV systems can be managed using the separable convolution framework.

\begin{figure*}[t]
         \centering
		\includegraphics[width=0.90\linewidth]{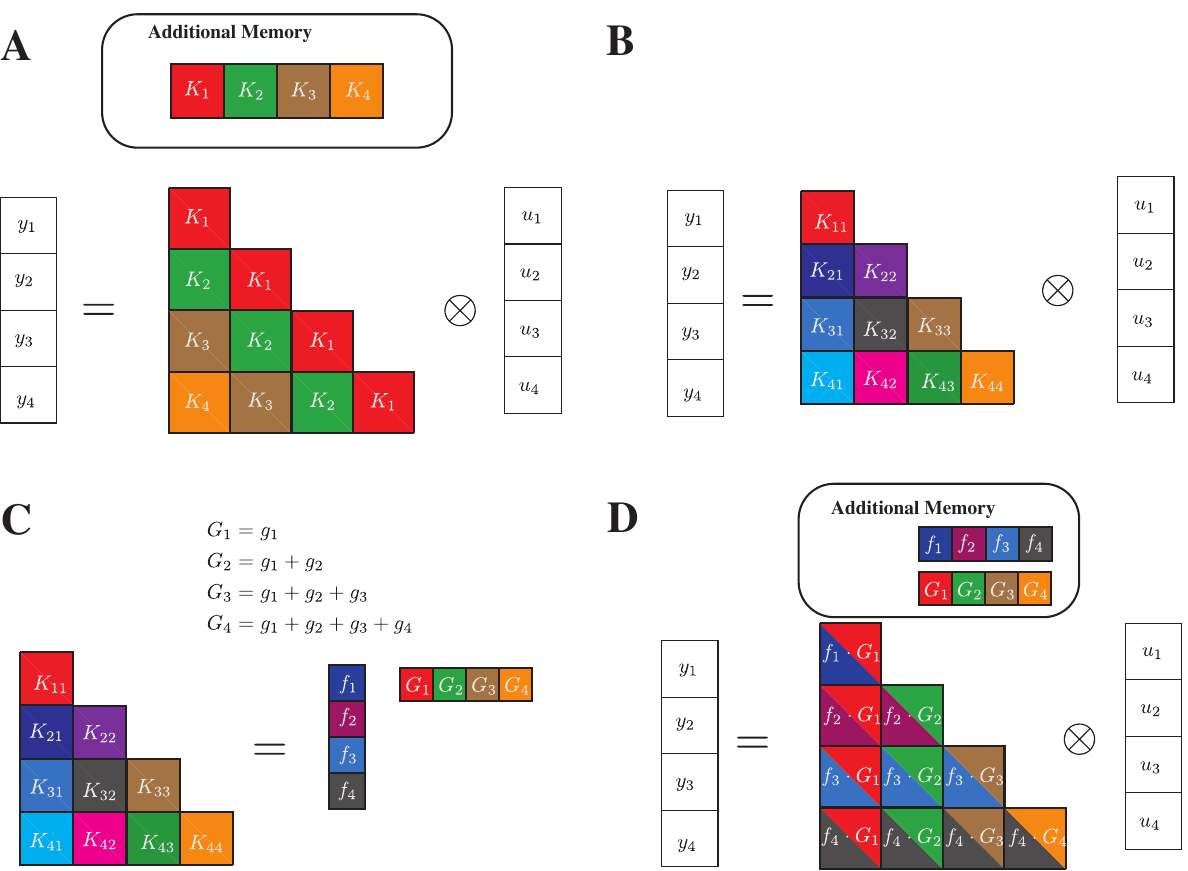}
\caption{ \textbf{Space Complexity of SSM formulations}: The figure illustrates an example convolution kernel for an SSM provided with four inputs at different timesteps ($u_t$). The convolution is visualized as a matrix multiplication operation over the input sequence. \textbf{A}. In \texttt{LTI} SSMs, the convolution kernel ($K_1, K_2, K_3, K_4$) is precomputed and applied to the input at different timesteps to obtain the output. \textbf{B}. In general \texttt{LTV} SSMs with time-varying recurrence, the convolution kernel has $O(L^2)$ elements, each unique to the input and output being considered at each timestep. The use of convolution in this scenario leads to quadratic complexity in space (akin to the transformers). \textbf{C}. In the separable convolution case, the quadratic matrix of the general SSM can actually be obtained by the outer product between $f_t$ for each timestep and the cumulative sums of a function $g_k$ independent of $t$. \textbf{D}. Computing the convolution kernel can be achieved in just an additional $O(2L)$ space. }
\label{fig:SSMaConvolution}
\end{figure*}

\subsection{SSM Convolution} \label{appendix:complexity:ssmConvolution}

The convolution operation of a general SSM is given by the following 
\begin{equation}
    y(t) = \sum_{k \leq t} C_t \left( A_{t-1} ... A_{k+2} A_{k+1} \right) B_k \, u(k)
\end{equation}
There are two cases for the above convolution we consider:
\paragraph{Linear Time Invariant (LTI)}: In the LTI case, the matrices in the SSM are constant over time, and the following holds
\begin{equation}
    y(t) = \sum_{k \leq t} C A^{t-1-k} B \, u(k)
\end{equation}
Now, the convolution kernel $K(t, k) = C \, A^{t-1-k} \, B$ can be precomputed, and since $A^{t-k-1}$ is common for many settings of $t$ and $k$ for which their difference is constant, the weights can be shared. In fact, there are only $O(L)$ unique entries in the convolution kernel (see Figure \ref{fig:SSMaConvolution} A). The other entries are duplicates of these entries. Once the convolution kernel is obtained, efficient algorithms like FFT or Parallel Scan can be used to compute the convolution in $O(\log L)$ time for each dimension, for a total of $O(L \log(L))$ time complexity. Therefore, the total time complexity for computing the kernel is $O(L \log L)$ with a space complexity of $O(L)$.

\paragraph{Linear Time Varying (LTV)}: In the \texttt{LTV} case, the matrices in the SSM can vary over time. This introduces additional complexity in representing the convolution kernel in $O(L^2)$ space, matching the quadratic complexity of computing self-attention in transformers. The reason for the quadratic complexity is that the entries in the convolution kernel $K(t, k)$ are unique for each setting of $t, k$. In the case of separable convolution kernels (e.g, the case of simultaneously diagonalizable matrices), the resulting $K(t, k)$ matrix has a further rank-1 factorization (this is discussed in detail in the main text). This factorization enables the convolution kernel to be represented with only an additional $O(2L)$ memory, where the $2$ factor comes from each vector element in the outer product.

\subsection{Parallel Scan} \label{appendix:parallelScanComplexity}

The reason for precomputing the convolution kernel is that we can apply one of the fast convolution algorithms - FFT or parallel scan. In our case, we perform the parallel prefix sums for computing cumulative sums. Here, we analyze the time and space complexity of the parallel prefix sum (scan) algorithm, where the goal is to compute the prefix sums of an array \( A = [a_0, a_1, \dots, a_{L-1}] \) such that the output array \( S \) satisfies
\begin{equation}
S_i = \sum_{j=0}^i a_j \quad \text{for } 0 \leq i, j < L.
\end{equation}

We assume a parallel computation model such as the PRAM (Parallel Random Access Machine) or a shared-memory model, and we are given \( P \) processors.

The parallel prefix sum algorithm typically consists of two main phases:

\begin{enumerate}
    \item \textbf{Upsweep phase (Reduction):} Build a binary tree over the array and compute partial sums from leaves to the root.
    \item \textbf{Downsweep phase:} Propagate prefix sums from the root back down the tree to compute the final result.
\end{enumerate}

Both phases traverse a binary tree structure of height \( \log_2 L \), assuming for simplicity that \( L \) is a power of two. Each level of the tree can be processed in parallel.

\paragraph{Work.}
The total number of operations (work) in both phases is:
\begin{equation}
W(L) = \underbrace{(L - 1)}_{\text{upsweep}} + \underbrace{(L - 1)}_{\text{downsweep}} = 2L - 2 = \mathcal{O}(L).
\end{equation}

This is the same amount of work as the sequential prefix sum algorithm, which confirms that the parallel algorithm is work-efficient.

\paragraph{Time Complexity with \( P \) Processors.}
Using Brent's Theorem (work-span model), the parallel time \( T_P \) on \( P \) processors is bounded by:
\begin{equation}
T_P \leq \frac{W(L)}{P} + S(L) = \mathcal{O}\left( \frac{L}{P} + \log L \right).    
\end{equation}

This means that when the number of parallel processors grows in the sequence length according to \( P = \Theta(L / \log L) \), the parallel prefix sum runs in optimal time \( \mathcal{O}(\log L) \).

\newpage

\paragraph{Space Complexity}

The space used by the algorithm includes:

\begin{itemize}
    \item The original input array \( A \), of size \( L \).
    \item An auxiliary array to store intermediate results, typically of size \( L \).
    \item Additional temporary variables per processor (constant per processor).
\end{itemize}

Hence, the total space complexity is:
\begin{equation}
\mathcal{O}(L + P) = \mathcal{O}(L) \quad \text{(since typically } P \leq L \text{)}.
\end{equation}

It is important to note that although the algorithm requires additional $O(L)$ space for the auxiliary variables, the CUDA kernel implementation hides these variables within the multiprocessor registers and shared memory. As a result, this complexity does not show up in the plots of either Mamba or auSSM. Existing GPU hardware for the 2080ti enables parallel processing of sequences up to $L=2048$. For longer sequences, the input is chunked into batches of $L=2048$.

\section{Implementation} \label{appendix:implementation}

The theoretical analysis of the separable kernel formulation shows that the adaptive kernel can be implemented in only an additional linear space. However, the factor associated with the linear space is $b d n$, where $b$ is the batch size, $d$ is the input dimension, and $n$ is the hidden state dimension. In this section, we first show a PyTorch implementation of the AUSSM kernel and Mamba kernel that can be easily coded, with the higher cost of the constant factors. Next, we show how we implement the AUSSM kernel in practice so that the additional complexity is hidden within the computations of a CUDA kernel.

\subsection{PyTorch} \label{appendix:implementation:pytorch}

One of the most useful aspects of the theory of separable convolutions is that there is a relatively efficient PyTorch formulation for computing SSM kernels, even when the SSM is partially/fully time varying. However, an additional constant-time penalty will be incurred. Nevertheless, the existence of such an implementation will still be interesting as it can enable fast prototyping of LTV SSMs, without dealing with the complexity of building a CUDA kernel. Here, we show two PyTorch implementations of the partial LTV Mamba kernel and the separable AUSSM kernel.

\begin{lstlisting}[language=Python]
def mamba_ssm(u, dt, A, B, C, D, z):
    """
    params:
        u: input Tensor (b,d,l)
        dt: Delta Tensor (b,d,l)
        A: Tensor (n)
        B: Tensor (b,n,l)
        C: Tensor (b,n,l)
        D: Tensor (d)
        z: Tensor (b,d,l)
    Returns:
        y: (b, d, l)
    """
    A = einsum(A, dt, "n,bdl->bdnl")
    G = torch.cumsum(axis=1)

    g = einsum(exp(-G), dt, B, u, "bdnl,bdl,bnl,bdl->bdnl"
    g = torch.cumsum(g, axis=-1)
    f = einsum(C, exp(G), "bnl,bdnl->bdnl")
    
    y = einsum(f, g, "bdnl,bdnl->bdl") + D * u
    y = y * F.silu(z)
    
    return y
\end{lstlisting}

The implementation of Mamba using the separable kernel formulation has fewer than 10 lines of PyTorch code. The PyTorch implementation of AUSSM is similar, except now we have to account for the time-varying $A$ matrix, and $B$ and $C$ are relaxed.

\begin{lstlisting}[language=Python]
def aussm(u, dt, chi, B, C, D, z):
    """
    params:
        u: input Tensor (b,d,l)
        dt: Delta Tensor (b,d,l)
        chi: adaptivity matrix (d,n,d)
        B: Tensor (n)
        C: Tensor (n)
        D: Tensor (d)
        z: Tensor (b,d,l)
    Returns:
        y: (b,d,l)
    """
    A = einsum(chi, u, "dnr,blr->bldn")
    A = einsum(dt, A, "bdl,bldn-bldn")
    G = torch.cumsum(axis=1)

    g = einsum(exp(-G), dt, B, u, "bdnl,bdl,n,bdl->bdnl"
    g = torch.cumsum(g, axis=-1)
    f = einsum(C, exp(G), "n,bdnl->bdnl")
    
    y = einsum(f, g, "bdnl,bdnl->bdl") + D * u
    y = y * F.silu(z)
    
    return y
\end{lstlisting}

In this implementation, Mamba and PyTorch have the same space and time complexity as the hidden state is realized for both, albeit at only a fraction of the cost.

\subsection{CUDA Kernel} \label{appendix:implementation:cuda}

In pure PyTorch, the additional complexity of realizing the hidden state is unavoidable, even though the computation does not have quadratic memory costs. The additional complexity of realizing the hidden state can be avoided by creating a CUDA kernel for the AUSSM equation. We use the following equation for the AUSSM, which we introduced in the main text:
\begin{equation}
    y_{t i} = \Re \left[ \sum_{k \leq t} \sum_{j} C_{j} \, \exp(\mathbf{i} \sum_{l \leq t} \theta_{A_{l i j}}) \, \frac{\Delta_{k i} B_{j}}{\exp(\mathbf{i} \sum_{l \leq k} \theta_{A_{l i j}})} \, u_{i}(k) \right] \, .
\end{equation}
Each thread of the CUDA implementation computes the array inside the nested summation, which results in $O(L)$ memory requirement for storing each of the variables ($A, f, g$) for the forward pass. These variables are not realized at the same time in the GPU memory, but in registers within the streaming multiprocessors (SM), each processor holding 4 to 16 items of each array. For the 2080Ti GPU, we ran the CUDA kernel on, the allowable maximum sequence length that can be processed by the kernel was 2048, after which the register and shared memory costs start to show up. We found that this sequence length is ideal for the hardware and tasks we tested on. The separable convolution trick is not restricted by the hardware and can scale well for GPUs that can be released in the future with larger registers and shared memory resources. 

\paragraph{Backpropagation:} For the CUDA kernel, we implemented a custom backpropagation operation. Implementing backpropagation requires the variables computed during the storage to be stored, which creates issues because the reason we are writing the CUDA kernel is so that we do not have to realize the memory-intensive hidden state. We therefore recompute the forward pass during backpropagation. The low complexity of implementing the AUSSM in CUDA means the recomputation does not incur a heavy penalty.

\newpage
\section{Experiments} \label{appendix:experiments}

\begin{table}[t]
    \centering
        \caption{\textbf{Best Hyperparameters.}}
    \begin{tabular}{l|cccccc}\toprule
       \textbf{Task Group} & \textbf{Task} & \textbf{Layers} & $\mathbf{d}$ & $\mathbf{n}$ & \textbf{weight decay} & \textbf{learning rate} \\ 
       \midrule
       \multirow{9}{*}{Algorithmic} & {repetition} & \texttt{ma} & 64 & 32 & 0.0 & 0.01 \\
       & {bucket sort} & \texttt{am} & 64 & 32 & 0.0 & 0.01 \\
       & {majority count} & \texttt{ma} & 64 & 32 & 0.1 & 0.01 \\
       & {majority} & \texttt{ma} & 64 & 32 & 0.1 & 0.01 \\
       & {solve equation} & \texttt{ma} & 64 & 32 & 0.0 & 0.01 \\
       & {mod arith} & \texttt{am} & 16 & 8 & 0.0 & 0.01 \\
       & {mod arith wo bra} & \texttt{ma} & 8 & 16 & 0.0 & 0.01 \\
       & {cycle nav} & \texttt{ma} & 16 & 8 & 0.0 & 0.01 \\
       & {parity} & \texttt{ma} & 16 & 8 & 0.0 & 0.01 \\
       \midrule

       \multirow{6}{*}{\begin{tabular}{@{}l@{}}Timeseries \\ Classification\end{tabular}} & {Heartbeat} & \texttt{ma} & 64 & 64 & 0.0 & 0.0001 \\
       & {SCP1} & \texttt{amma} & 16 & 128 & 0.0 & 0.001 \\
       & {SCP2} & \texttt{ma} & 16 & 128 & 0.0 & 0.0001 \\
       & {Ethanol} & \texttt{ammama} & 16 & 64 & 0.001 & 0.00001 \\
       & {Motor} & \texttt{ma} & 16 & 128 & 0.0 & 0.0001 \\
       & {Worms} & \texttt{amma} & 16 & 16 & 0.0 & 0.001 \\
       \midrule

       \multirow{2}{*}{\begin{tabular}{@{}l@{}}Timeseries \\ Regression\end{tabular}} & {weather} & \texttt{ma} & 16 & 128 & 0.0 & 0.001 \\
       & & & & & &  \\
       \bottomrule   
    \end{tabular}
\end{table}

We conduct three sets of experiments: (1) to evaluate the time/memory complexities of the different AUSSM implementations, (2) to evaluate the performance of AUSSM in algorithmic tasks enabling insights into the expressive power, and (3) to evaluate real-world performance implications in a range of long time series benchmarks. For each of the tasks involving training models (2 and 3), we perform two pipeline processes to obtain the final test accuracies. The first pipeline is the training and model selection pipeline with only the training and validation sets that are preselected based on the same criteria used by prior literature. The second pipeline is the test pipeline and is entirely separate and performed starting 10 days prior to paper submission to avoid model selection based on the test results. The classification tasks are evaluated using the scaled test accuracy metric, where the obtained accuracy values are scaled with respect to the baseline performance of a uniform random distribution, as shown below. 
$$
\text{scaled accuracy score} = \frac{\text{test accuracy score} - \text{baseline accuracy score}}{1 - \text{baseline accuracy score}}
$$

All the models were run in a supercomputing cluster, where we used 40 2080Ti GPUs for all except the dataset \texttt{Eigenworms} dataset that required higher memory. This is the lowest GPU available in the cluster, with at least a CUDA compute of 7.5 required to run the Mamba and AUSSM CUDA kernels. For a larger memory \texttt{Eigenworms} workload, we used the L4 GPU, which has a VRAM of 23GB. Higher VRAM GPUs were available in the cluster, but they were in high demand and unnecessary, as our optimized CUDA kernel was able to handle even the large-scale tasks in modest hardware.

\subsection{Scalability Evaluation}

To evaluate scalability in a fair manner, we report only the time spent in computations, ignoring the latencies associated with moving variables between the GPU and the CPU. This provides a fair evaluation of the algorithmic performance. $5$ runs are used to warm up the GPU before starting the evaluation to remove transient start-up effects. The run-time values are averaged over 50 runs, where each run computes a forward and backward pass for each of the implementations. The peak memory used during each run is also similarly recorded and averaged for each of the 50 runs. 

\subsection{Time Series benchmark}

For time series classification and regression benchmarks, we follow the train-validation protocol for model selection, following prior works on the benchmark. For testing, we modified the procedure as the five arbitrary random seeds used to evaluate test performance in prior works may introduce unwanted biases due to the low number of random samples. Also, prior works used JAX for implementations, while we used PyTorch, and the random seed does not create the same train-validation-test sets due to differences in the pseudorandom number generators. We thus decided to evaluate on train-validation-test splits created with 20 different seeds. We anticipated that the higher samples would help in providing a better estimation of the test accuracy than what the five arbitrary seeds provide. For each task, we performed a hyperparameter search over the following grid: $d \in \{ 16, 64, 128 \}, n \in \{ 16, 64, 128 \}$, learning rate $\in \{ 0.00001, 0.0001, 0.001 \}$, and five different seeds for model selection. The model hyperparameters with the highest mean validation accuracy are chosen for evaluation in the test set. 

\subsection{Algorithmic Tasks}

For algorithmic tasks, we used the results from \cite{Beck2024xLSTMEL} for comparing against baseline models. We used a grid search for hyperparameter tuning with a grid search over $d \in \{ 8, 16, 32, 64 \}, n \in \{ 8, 16, 32 \}$, weight decay $\in \{ 0.0, 0.001, 0.01 \}$, learning rate in $\{ 0.0001, 0.001, 0.01\}$ and five seeds. The batch size was fixed at 256. For pure AUSSM blocks, we tested networks with a depth of 2, 4, and 6. For hybrid AUSSM blocks, we tested all possible 2-block configurations of Mamba (represented as \texttt{m}) and AUSSM blocks (represented as \texttt{a}) - $\{ \texttt{ma}, \texttt{am}, \texttt{mm}, \texttt{aa} \}$. For each of the evaluated algorithmic tasks, we randomly sampled 10000 samples from a train set up to length-40 sequences. The validation set is sampled independently from 40-256 sequence lengths and had 1,000 samples. The test set had 10,000 samples from sequences of up to 256 sequence lengths.

The tasks use the same vocabulary size and configuration used in \cite{Beck2024xLSTMEL}. Some samples from the tasks are shown below as a timeline. Here, the mask is applied to the output to determine the output of interest for computing the loss and output.

\newpage
\begin{lstlisting}[basicstyle=\tiny]
                                   Task: repetition                                    
+-------------------------------------------------------------------------------------+
| time   |  0   |  1   |  2   |  3   |  4   |  5   |  6   |  7   |  8   |  9   |  10  |
|--------+------+------+------+------+------+------+------+------+------+------+------|
| input  |  3   |  5   |  0   |  7   |  3   | ACT  |  3   |  5   |  0   |  7   |  3   |
| output |  5   |  0   |  7   |  3   | ACT  |  3   |  5   |  0   |  7   |  3   | PAD  |
| mask   |  0   |  0   |  0   |  0   |  0   |  1   |  1   |  1   |  1   |  1   |  0   |
+-------------------------------------------------------------------------------------+

                                   Task: bucketsort                                    
+-------------------------------------------------------------------------------------+
| time   |  0   |  1   |  2   |  3   |  4   |  5   |  6   |  7   |  8   |  9   |  10  |
|--------+------+------+------+------+------+------+------+------+------+------+------|
| input  |  3   |  5   |  0   |  7   |  3   | ACT  |  0   |  3   |  3   |  5   |  7   |
| output |  5   |  0   |  7   |  3   | ACT  |  0   |  3   |  3   |  5   |  7   | PAD  |
| mask   |  0   |  0   |  0   |  0   |  0   |  1   |  1   |  1   |  1   |  1   |  0   |
+-------------------------------------------------------------------------------------+

                              Task: modarithmeticwobraces                              
+-------------------------------------------------------------------------------------+
| time   |  0   |  1   |  2   |  3   |  4   |  5   |  6   |  7   |  8   |  9   |  10  |
|--------+------+------+------+------+------+------+------+------+------+------+------|
| input  |  0   |  *   |  2   |  -   |  6   |  -   |  7   |  -   |  0   |  =   |  5   |
| output |  *   |  2   |  -   |  6   |  -   |  7   |  -   |  0   |  =   |  5   | PAD  |
| mask   |  0   |  0   |  0   |  0   |  0   |  0   |  0   |  0   |  0   |  1   |  0   |
+-------------------------------------------------------------------------------------+

                                    Task: cyclenav                                     
+-------------------------------------------------------------------------------------+
| time   |  0   |  1   |  2   |  3   |  4   |  5   |  6   |  7   |  8   |  9   |  10  |
|--------+------+------+------+------+------+------+------+------+------+------+------|
| input  |  +1  | STAY |  +1  |  -1  |  +1  |  -1  |  -1  |  -1  |  +1  |  0   | PAD  |
| output | STAY |  +1  |  -1  |  +1  |  -1  |  -1  |  -1  |  +1  |  0   | PAD  | PAD  |
| mask   |  0   |  0   |  0   |  0   |  0   |  0   |  0   |  0   |  1   |  0   |  0   |
+-------------------------------------------------------------------------------------+

                                  Task: modarithmetic                                  
+-------------------------------------------------------------------------------------+
| time   |  0   |  1   |  2   |  3   |  4   |  5   |  6   |  7   |  8   |  9   |  10  |
|--------+------+------+------+------+------+------+------+------+------+------+------|
| input  |  (   |  (   |  3   |  -   |  3   |  )   |  -   |  4   |  )   |  =   |  3   |
| output |  (   |  3   |  -   |  3   |  )   |  -   |  4   |  )   |  =   |  3   | PAD  |
| mask   |  0   |  0   |  0   |  0   |  0   |  0   |  0   |  0   |  0   |  1   |  0   |
+-------------------------------------------------------------------------------------+

                                  Task: solveequation                                  
+-------------------------------------------------------------------------------------+
| time   |  0   |  1   |  2   |  3   |  4   |  5   |  6   |  7   |  8   |  9   |  10  |
|--------+------+------+------+------+------+------+------+------+------+------+------|
| input  |  x   |  =   |  (   |  2   |  +   |  1   |  )   | ACT  |  3   | PAD  | PAD  |
| output |  =   |  (   |  2   |  +   |  1   |  )   | ACT  |  3   | PAD  | PAD  | PAD  |
| mask   |  0   |  0   |  0   |  0   |  0   |  0   |  0   |  1   |  0   |  0   |  0   |
+-------------------------------------------------------------------------------------+

                                     Task: parity                                      
+-------------------------------------------------------------------------------------+
| time   |  0   |  1   |  2   |  3   |  4   |  5   |  6   |  7   |  8   |  9   |  10  |
|--------+------+------+------+------+------+------+------+------+------+------+------|
| input  |  1   |  1   |  0   |  1   |  1   |  1   |  1   |  1   |  1   |  1   |  0   |
| output |  1   |  0   |  1   |  1   |  1   |  1   |  1   |  1   |  1   |  0   |  0   |
| mask   |  0   |  0   |  0   |  0   |  0   |  0   |  0   |  0   |  0   |  0   |  1   |
+-------------------------------------------------------------------------------------+

                                  Task: majoritycount                                  
+-------------------------------------------------------------------------------------+
| time   |  0   |  1   |  2   |  3   |  4   |  5   |  6   |  7   |  8   |  9   |  10  |
|--------+------+------+------+------+------+------+------+------+------+------+------|
| input  |  45  |  56  |  51  |  43  |  51  |  34  |  10  |  46  |  54  |  44  |  56  |
| output |  56  |  51  |  43  |  51  |  34  |  10  |  46  |  54  |  44  |  56  |  2   |
| mask   |  0   |  0   |  0   |  0   |  0   |  0   |  0   |  0   |  0   |  0   |  1   |
+-------------------------------------------------------------------------------------+

                                    Task: majority                                     
+-------------------------------------------------------------------------------------+
| time   |  0   |  1   |  2   |  3   |  4   |  5   |  6   |  7   |  8   |  9   |  10  |
|--------+------+------+------+------+------+------+------+------+------+------+------|
| input  |  45  |  56  |  51  |  43  |  51  |  34  |  10  |  46  |  54  |  44  |  56  |
| output |  56  |  51  |  43  |  51  |  34  |  10  |  46  |  54  |  44  |  56  |  51  |
| mask   |  0   |  0   |  0   |  0   |  0   |  0   |  0   |  0   |  0   |  0   |  1   |
+-------------------------------------------------------------------------------------+

                                       Task: set                                       
+-------------------------------------------------------------------------------------+
| time   |  0   |  1   |  2   |  3   |  4   |  5   |  6   |  7   |  8   |  9   |  10  |
|--------+------+------+------+------+------+------+------+------+------+------+------|
| input  |  3   |  5   |  0   |  7   |  3   | ACT  |  0   |  3   |  5   |  7   | PAD  |
| output |  5   |  0   |  7   |  3   | ACT  |  0   |  3   |  5   |  7   | PAD  | PAD  |
| mask   |  0   |  0   |  0   |  0   |  0   |  1   |  1   |  1   |  1   |  0   |  0   |
+-------------------------------------------------------------------------------------+

\end{lstlisting}


\newpage
    \section*{NeurIPS Paper Checklist}

\begin{enumerate}

\item {\bf Claims}
    \item[] Question: Do the main claims made in the abstract and introduction accurately reflect the paper's contributions and scope?
    \item[] Answer: \answerYes{} 
    \item[] Justification: We make the following claims in the abstract: (1) AUSSMs are maximally expressive in the class of diagonal SSMs - proved in Section \ref{section:theoryExpressivity}. (2) Separable convolution kernel formulation enables scalability. Theoretical exposition in Section \ref{section:implementation} and plots in Figure \ref{fig:AUSSMPerformance}. (3) Unitary properties analyzed in Section \ref{section:theoryExpressivity}. (4) The ability to solve a general class of regular languages - experimental validation in Table \ref{table:algorithmicBenchmark}. (5) Competent performance on real-world benchmarks in Table \ref{table:UEA}, \ref{table:weather}.
    \item[] Guidelines:
    \begin{itemize}
        \item The answer NA means that the abstract and introduction do not include the claims made in the paper.
        \item The abstract and/or introduction should clearly state the claims made, including the contributions made in the paper and important assumptions and limitations. A No or NA answer to this question will not be perceived well by the reviewers. 
        \item The claims made should match theoretical and experimental results, and reflect how much the results can be expected to generalize to other settings. 
        \item It is fine to include aspirational goals as motivation as long as it is clear that these goals are not attained by the paper. 
    \end{itemize}

\item {\bf Limitations}
    \item[] Question: Does the paper discuss the limitations of the work performed by the authors?
    \item[] Answer: \answerYes{} 
    \item[] Justification: Limitations are discussed in Section \ref{section:discussion}.
    \item[] Guidelines:
    \begin{itemize}
        \item The answer NA means that the paper has no limitation while the answer No means that the paper has limitations, but those are not discussed in the paper. 
        \item The authors are encouraged to create a separate "Limitations" section in their paper.
        \item The paper should point out any strong assumptions and how robust the results are to violations of these assumptions (e.g., independence assumptions, noiseless settings, model well-specification, asymptotic approximations only holding locally). The authors should reflect on how these assumptions might be violated in practice and what the implications would be.
        \item The authors should reflect on the scope of the claims made, e.g., if the approach was only tested on a few datasets or with a few runs. In general, empirical results often depend on implicit assumptions, which should be articulated.
        \item The authors should reflect on the factors that influence the performance of the approach. For example, a facial recognition algorithm may perform poorly when image resolution is low or images are taken in low lighting. Or a speech-to-text system might not be used reliably to provide closed captions for online lectures because it fails to handle technical jargon.
        \item The authors should discuss the computational efficiency of the proposed algorithms and how they scale with dataset size.
        \item If applicable, the authors should discuss possible limitations of their approach to address problems of privacy and fairness.
        \item While the authors might fear that complete honesty about limitations might be used by reviewers as grounds for rejection, a worse outcome might be that reviewers discover limitations that aren't acknowledged in the paper. The authors should use their best judgment and recognize that individual actions in favor of transparency play an important role in developing norms that preserve the integrity of the community. Reviewers will be specifically instructed to not penalize honesty concerning limitations.
    \end{itemize}

\item {\bf Theory assumptions and proofs}
    \item[] Question: For each theoretical result, does the paper provide the full set of assumptions and a complete (and correct) proof?
    \item[] Answer: \answerYes{} 
    \item[] Justification: The sufficient conditions for applying the separable convolution formualation is detailed in Section \ref{section:implementation}. The proofs in Section \ref{section:theoryExpressivity} discusses related works and the associated assumptions used by them, which we implicitly assume.
    \item[] Guidelines:
    \begin{itemize}
        \item The answer NA means that the paper does not include theoretical results. 
        \item All the theorems, formulas, and proofs in the paper should be numbered and cross-referenced.
        \item All assumptions should be clearly stated or referenced in the statement of any theorems.
        \item The proofs can either appear in the main paper or the supplemental material, but if they appear in the supplemental material, the authors are encouraged to provide a short proof sketch to provide intuition. 
        \item Inversely, any informal proof provided in the core of the paper should be complemented by formal proofs provided in appendix or supplemental material.
        \item Theorems and Lemmas that the proof relies upon should be properly referenced. 
    \end{itemize}

    \item {\bf Experimental result reproducibility}
    \item[] Question: Does the paper fully disclose all the information needed to reproduce the main experimental results of the paper to the extent that it affects the main claims and/or conclusions of the paper (regardless of whether the code and data are provided or not)?
    \item[] Answer: \answerYes{} 
    \item[] Justification: Short descriptions of the methodology is provided in the main text along with the discussion of the results in Section \ref{sec:experiments} where the related work that used identical experimental patterns are also discussed. More detailed descriptions are in the Appendix. 
    \item[] Guidelines:
    \begin{itemize}
        \item The answer NA means that the paper does not include experiments.
        \item If the paper includes experiments, a No answer to this question will not be perceived well by the reviewers: Making the paper reproducible is important, regardless of whether the code and data are provided or not.
        \item If the contribution is a dataset and/or model, the authors should describe the steps taken to make their results reproducible or verifiable. 
        \item Depending on the contribution, reproducibility can be accomplished in various ways. For example, if the contribution is a novel architecture, describing the architecture fully might suffice, or if the contribution is a specific model and empirical evaluation, it may be necessary to either make it possible for others to replicate the model with the same dataset, or provide access to the model. In general. releasing code and data is often one good way to accomplish this, but reproducibility can also be provided via detailed instructions for how to replicate the results, access to a hosted model (e.g., in the case of a large language model), releasing of a model checkpoint, or other means that are appropriate to the research performed.
        \item While NeurIPS does not require releasing code, the conference does require all submissions to provide some reasonable avenue for reproducibility, which may depend on the nature of the contribution. For example
        \begin{enumerate}
            \item If the contribution is primarily a new algorithm, the paper should make it clear how to reproduce that algorithm.
            \item If the contribution is primarily a new model architecture, the paper should describe the architecture clearly and fully.
            \item If the contribution is a new model (e.g., a large language model), then there should either be a way to access this model for reproducing the results or a way to reproduce the model (e.g., with an open-source dataset or instructions for how to construct the dataset).
            \item We recognize that reproducibility may be tricky in some cases, in which case authors are welcome to describe the particular way they provide for reproducibility. In the case of closed-source models, it may be that access to the model is limited in some way (e.g., to registered users), but it should be possible for other researchers to have some path to reproducing or verifying the results.
        \end{enumerate}
    \end{itemize}

\item {\bf Open access to data and code}
    \item[] Question: Does the paper provide open access to the data and code, with sufficient instructions to faithfully reproduce the main experimental results, as described in supplemental material?
    \item[] Answer: \answerYes{} 
    \item[] Justification: The code is made available as part of the supplementary information. We use data that is publicly available except for algorithmic tasks. For these tasks, we release the dataloaders along with the code. The code will be made public following the publication of the manuscript.
    \item[] Guidelines:
    \begin{itemize}
        \item The answer NA means that paper does not include experiments requiring code.
        \item Please see the NeurIPS code and data submission guidelines (\url{https://nips.cc/public/guides/CodeSubmissionPolicy}) for more details.
        \item While we encourage the release of code and data, we understand that this might not be possible, so “No” is an acceptable answer. Papers cannot be rejected simply for not including code, unless this is central to the contribution (e.g., for a new open-source benchmark).
        \item The instructions should contain the exact command and environment needed to run to reproduce the results. See the NeurIPS code and data submission guidelines (\url{https://nips.cc/public/guides/CodeSubmissionPolicy}) for more details.
        \item The authors should provide instructions on data access and preparation, including how to access the raw data, preprocessed data, intermediate data, and generated data, etc.
        \item The authors should provide scripts to reproduce all experimental results for the new proposed method and baselines. If only a subset of experiments are reproducible, they should state which ones are omitted from the script and why.
        \item At submission time, to preserve anonymity, the authors should release anonymized versions (if applicable).
        \item Providing as much information as possible in supplemental material (appended to the paper) is recommended, but including URLs to data and code is permitted.
    \end{itemize}

\item {\bf Experimental setting/details}
    \item[] Question: Does the paper specify all the training and test details (e.g., data splits, hyperparameters, how they were chosen, type of optimizer, etc.) necessary to understand the results?
    \item[] Answer: \answerYes{} 
    \item[] Justification: Short form descriptions of experimental procedure are in the main paper. Long-form details of the precise hyperparameter tuning protocol and train-validation-test procedure are in the Appendix.
    \item[] Guidelines:
    \begin{itemize}
        \item The answer NA means that the paper does not include experiments.
        \item The experimental setting should be presented in the core of the paper to a level of detail that is necessary to appreciate the results and make sense of them.
        \item The full details can be provided either with the code, in appendix, or as supplemental material.
    \end{itemize}

\item {\bf Experiment statistical significance}
    \item[] Question: Does the paper report error bars suitably and correctly defined or other appropriate information about the statistical significance of the experiments?
    \item[] Answer: \answerYes{} 
    \item[] Justification: Standard Deviations are reported alongside the Long Time series benchmark. The algorithmic tasks do not contain standard deviations, as this is a synthetic benchmark. For weather, we follow prior work and do not report standard deviation in the table; the standard deviation we obtained is 0.0173. 
    \item[] Guidelines:
    \begin{itemize}
        \item The answer NA means that the paper does not include experiments.
        \item The authors should answer "Yes" if the results are accompanied by error bars, confidence intervals, or statistical significance tests, at least for the experiments that support the main claims of the paper.
        \item The factors of variability that the error bars are capturing should be clearly stated (for example, train/test split, initialization, random drawing of some parameter, or overall run with given experimental conditions).
        \item The method for calculating the error bars should be explained (closed form formula, call to a library function, bootstrap, etc.)
        \item The assumptions made should be given (e.g., Normally distributed errors).
        \item It should be clear whether the error bar is the standard deviation or the standard error of the mean.
        \item It is OK to report 1-sigma error bars, but one should state it. The authors should preferably report a 2-sigma error bar than state that they have a 96\% CI, if the hypothesis of Normality of errors is not verified.
        \item For asymmetric distributions, the authors should be careful not to show in tables or figures symmetric error bars that would yield results that are out of range (e.g. negative error rates).
        \item If error bars are reported in tables or plots, The authors should explain in the text how they were calculated and reference the corresponding figures or tables in the text.
    \end{itemize}

\item {\bf Experiments compute resources}
    \item[] Question: For each experiment, does the paper provide sufficient information on the computer resources (type of compute workers, memory, time of execution) needed to reproduce the experiments?
    \item[] Answer: \answerYes{} 
    \item[] Justification: Resources, including the type of graphics card and the available VRAM, are detailed in the experiments. Time of execution and further details of the compute resources are in the appendix.
    \item[] Guidelines:
    \begin{itemize}
        \item The answer NA means that the paper does not include experiments.
        \item The paper should indicate the type of compute workers, CPU or GPU, internal cluster, or cloud provider, including relevant memory and storage.
        \item The paper should provide the amount of compute required for each of the individual experimental runs as well as estimate the total compute. 
        \item The paper should disclose whether the full research project required more compute than the experiments reported in the paper (e.g., preliminary or failed experiments that didn't make it into the paper). 
    \end{itemize}
    
\item {\bf Code of ethics}
    \item[] Question: Does the research conducted in the paper conform, in every respect, with the NeurIPS Code of Ethics \url{https://neurips.cc/public/EthicsGuidelines}?
    \item[] Answer: \answerYes{} 
    \item[] Justification: Our research introduces a new computational model, and this does not use human subjects for the experiments. We do not create any data and use only publicly available datasets or standard synthetic benchmarks. There are no societal concerns we are aware of, as this is a relatively small-scale study on a computational research question.
    \item[] Guidelines:
    \begin{itemize}
        \item The answer NA means that the authors have not reviewed the NeurIPS Code of Ethics.
        \item If the authors answer No, they should explain the special circumstances that require a deviation from the Code of Ethics.
        \item The authors should make sure to preserve anonymity (e.g., if there is a special consideration due to laws or regulations in their jurisdiction).
    \end{itemize}

\item {\bf Broader impacts}
    \item[] Question: Does the paper discuss both potential positive societal impacts and negative societal impacts of the work performed?
    \item[] Answer: \answerNA{} 
    \item[] Justification: The study is performed and impacts only the academic community interested in conducting further research in SSMs. The work is primarily foundational in creating a new computational algorithm for existing models.
    \item[] Guidelines:
    \begin{itemize}
        \item The answer NA means that there is no societal impact of the work performed.
        \item If the authors answer NA or No, they should explain why their work has no societal impact or why the paper does not address societal impact.
        \item Examples of negative societal impacts include potential malicious or unintended uses (e.g., disinformation, generating fake profiles, surveillance), fairness considerations (e.g., deployment of technologies that could make decisions that unfairly impact specific groups), privacy considerations, and security considerations.
        \item The conference expects that many papers will be foundational research and not tied to particular applications, let alone deployments. However, if there is a direct path to any negative applications, the authors should point it out. For example, it is legitimate to point out that an improvement in the quality of generative models could be used to generate deepfakes for disinformation. On the other hand, it is not needed to point out that a generic algorithm for optimizing neural networks could enable people to train models that generate Deepfakes faster.
        \item The authors should consider possible harms that could arise when the technology is being used as intended and functioning correctly, harms that could arise when the technology is being used as intended but gives incorrect results, and harms following from (intentional or unintentional) misuse of the technology.
        \item If there are negative societal impacts, the authors could also discuss possible mitigation strategies (e.g., gated release of models, providing defenses in addition to attacks, mechanisms for monitoring misuse, mechanisms to monitor how a system learns from feedback over time, improving the efficiency and accessibility of ML).
    \end{itemize}
    
\item {\bf Safeguards}
    \item[] Question: Does the paper describe safeguards that have been put in place for responsible release of data or models that have a high risk for misuse (e.g., pretrained language models, image generators, or scraped datasets)?
    \item[] Answer: \answerNA{} 
    \item[] Justification: The paper uses publicly available data that is identified as risk free and typically used in conducting academic research.
    \item[] Guidelines:
    \begin{itemize}
        \item The answer NA means that the paper poses no such risks.
        \item Released models that have a high risk for misuse or dual-use should be released with necessary safeguards to allow for controlled use of the model, for example by requiring that users adhere to usage guidelines or restrictions to access the model or implementing safety filters. 
        \item Datasets that have been scraped from the Internet could pose safety risks. The authors should describe how they avoided releasing unsafe images.
        \item We recognize that providing effective safeguards is challenging, and many papers do not require this, but we encourage authors to take this into account and make a best faith effort.
    \end{itemize}

\item {\bf Licenses for existing assets}
    \item[] Question: Are the creators or original owners of assets (e.g., code, data, models), used in the paper, properly credited and are the license and terms of use explicitly mentioned and properly respected?
    \item[] Answer: \answerYes{} 
    \item[] Justification: The code and data are publicly released as open source software. the code bases we used for compiling our code is attributed to the respective authors.
    \item[] Guidelines:
    \begin{itemize}
        \item The answer NA means that the paper does not use existing assets.
        \item The authors should cite the original paper that produced the code package or dataset.
        \item The authors should state which version of the asset is used and, if possible, include a URL.
        \item The name of the license (e.g., CC-BY 4.0) should be included for each asset.
        \item For scraped data from a particular source (e.g., website), the copyright and terms of service of that source should be provided.
        \item If assets are released, the license, copyright information, and terms of use in the package should be provided. For popular datasets, \url{paperswithcode.com/datasets} has curated licenses for some datasets. Their licensing guide can help determine the license of a dataset.
        \item For existing datasets that are re-packaged, both the original license and the license of the derived asset (if it has changed) should be provided.
        \item If this information is not available online, the authors are encouraged to reach out to the asset's creators.
    \end{itemize}

\item {\bf New assets}
    \item[] Question: Are new assets introduced in the paper well documented and is the documentation provided alongside the assets?
    \item[] Answer: \answerYes{} 
    \item[] Justification: A README is available on how to install, test and use the CUDA kernel we release.
    \item[] Guidelines:
    \begin{itemize}
        \item The answer NA means that the paper does not release new assets.
        \item Researchers should communicate the details of the dataset/code/model as part of their submissions via structured templates. This includes details about training, license, limitations, etc. 
        \item The paper should discuss whether and how consent was obtained from people whose asset is used.
        \item At submission time, remember to anonymize your assets (if applicable). You can either create an anonymized URL or include an anonymized zip file.
    \end{itemize}

\item {\bf Crowdsourcing and research with human subjects}
    \item[] Question: For crowdsourcing experiments and research with human subjects, does the paper include the full text of instructions given to participants and screenshots, if applicable, as well as details about compensation (if any)? 
    \item[] Answer: \answerNA{} 
    \item[] Justification: we do not use this experimental protocol.
    \item[] Guidelines:
    \begin{itemize}
        \item The answer NA means that the paper does not involve crowdsourcing nor research with human subjects.
        \item Including this information in the supplemental material is fine, but if the main contribution of the paper involves human subjects, then as much detail as possible should be included in the main paper. 
        \item According to the NeurIPS Code of Ethics, workers involved in data collection, curation, or other labor should be paid at least the minimum wage in the country of the data collector. 
    \end{itemize}

\item {\bf Institutional review board (IRB) approvals or equivalent for research with human subjects}
    \item[] Question: Does the paper describe potential risks incurred by study participants, whether such risks were disclosed to the subjects, and whether Institutional Review Board (IRB) approvals (or an equivalent approval/review based on the requirements of your country or institution) were obtained?
    \item[] Answer: \answerNA{} 
    \item[] Justification: The experimental we do does not use human subjects and do not require IRB approval.
    \item[] Guidelines:
    \begin{itemize}
        \item The answer NA means that the paper does not involve crowdsourcing nor research with human subjects.
        \item Depending on the country in which research is conducted, IRB approval (or equivalent) may be required for any human subjects research. If you obtained IRB approval, you should clearly state this in the paper. 
        \item We recognize that the procedures for this may vary significantly between institutions and locations, and we expect authors to adhere to the NeurIPS Code of Ethics and the guidelines for their institution. 
        \item For initial submissions, do not include any information that would break anonymity (if applicable), such as the institution conducting the review.
    \end{itemize}

\item {\bf Declaration of LLM usage}
    \item[] Question: Does the paper describe the usage of LLMs if it is an important, original, or non-standard component of the core methods in this research? Note that if the LLM is used only for writing, editing, or formatting purposes and does not impact the core methodology, scientific rigorousness, or originality of the research, declaration is not required.
    \item[] Answer: \answerNA{} 
    \item[] Justification: LLM was not used in formulating the research. Only use of LLMs was in editing.
    \item[] Guidelines:
    \begin{itemize}
        \item The answer NA means that the core method development in this research does not involve LLMs as any important, original, or non-standard components.
        \item Please refer to our LLM policy (\url{https://neurips.cc/Conferences/2025/LLM}) for what should or should not be described.
    \end{itemize}

\end{enumerate}

\end{document}